\documentclass{article} %
\usepackage{iclr2027_conference,times}

\usepackage{amsmath,amsfonts,bm}

\def\eqref#1{equation~\ref{#1}}

\def\1{\bm{1}}

\DeclareMathAlphabet{\mathsfit}{\encodingdefault}{\sfdefault}{m}{sl}
\SetMathAlphabet{\mathsfit}{bold}{\encodingdefault}{\sfdefault}{bx}{n}

\usepackage{hyperref}
\usepackage{url}

\title{Ambient Discrete Diffusion: \\ Using the Wrong Data at the Right Time for Data Efficient Learning}

\author{ \\
\textbf{Julian Kleutgens}$^{1,2,3}$,
\textbf{Mauricio Tec}$^{2}$,
\textbf{Claudio Battiloro}$^{2}$,
\textbf{Francesca Dominici}$^{2}$, \\
\textbf{Giannis Daras}$^{3}$ \\
$^{1}$ ETH Zürich, $^{2}$ Harvard University, $^{3}$ MIT
}

\usepackage{graphicx}
\usepackage{subcaption}
\usepackage{float}
\usepackage{amsmath,amssymb,amsfonts,bm,mathtools,amsthm}
\usepackage{array}
\newcolumntype{M}[1]{>{\centering\arraybackslash}m{#1}}
\usepackage{multirow}
\usepackage[T1]{fontenc}

\usepackage{times}
\usepackage{amsmath}
\usepackage{tikz}
\usetikzlibrary{arrows.meta,decorations.pathreplacing,calc,fadings}
 
\definecolor{TgtC}{RGB}{ 27,120, 90}
\definecolor{SrcC}{RGB}{200, 70, 45}
\definecolor{ColC}{RGB}{140, 60,155}
\definecolor{ArrC}{RGB}{ 40, 85,175}
\definecolor{MaskC}{RGB}{130,130,138}

\usepackage{algorithm}
\usepackage{algpseudocode}
\usepackage[utf8]{inputenc}

\newtheorem{theorem}{Theorem}
\newtheorem{lemma}[theorem]{Lemma}

\newtheorem{assumption}[theorem]{Assumption}

\usepackage{thmtools}
\usepackage{booktabs}       %

\newif\ifdraftnotes \draftnotesfalse

\usepackage{fvextra}   %
\DefineVerbatimEnvironment{sample}{Verbatim}%
{breaklines=true, breakanywhere=true, breaksymbolleft={},fontsize=\footnotesize}

\usepackage{soul}
\sethlcolor{yellow}

\iclrfinalcopy %
\begin{document}

\maketitle

\begin{abstract}
We introduce RefineMix, a framework for training discrete diffusion models under severe data scarcity, a common constraint in scientific applications. RefineMix uses out-of-distribution data at selected diffusion times to improve generalization without biasing the sampling distribution. Although this strategy has been explored in continuous diffusion, discrete diffusion presents a distinct challenge: unlike Gaussian noise, masking preserves domain information in surviving tokens, limiting the use of related data at high noise levels. At low noise levels, however, the domains’ effectively disjoint supports become an advantage, allowing the model to learn from both in-domain and out-of-distribution data without biasing the sampler. We formalize these intuitions and provide a theoretical analysis for the proposed method. Experimentally, across five domain-shift settings, RefineMix matches or outperforms in-domain finetuning and data mixing. For protein sequence generation, finetuning with just 197 in-domain examples nearly doubles the fraction of generated proteins that are simultaneously novel, foldable, and in-family compared to standard finetuning.
\end{abstract}

\section{Introduction}
Discrete diffusion modeling has recently gained significant research traction due to its potential for faster sampling compared to the autoregressive paradigm~\citep{song2025seed,labs2025mercury}. 
It has already been successfully applied to language modeling~\citep{austin2023structureddenoisingdiffusionmodels,lou2024discrete,sahoo2024simpleeffectivemaskeddiffusion,shi2025simplifiedgeneralizedmaskeddiffusion,ou2025your,sahoo2025diffusionduality,sahoo2025esoteric}, molecules~\citep{lee2025genmol} and graphs~\citep{liu2023generative} among other modalities. Beyond the potential for fast sampling, recent evidence~\citep{ni2025diffusion, gao2025makes, prabhudesai2026diffusion} suggests that discrete diffusion models might be more data-efficient than their autoregressive counterparts. However, the performance and the generalization capabilities of the final model still depend on the amount of available training data points~\citep{kaplan2020scalinglawsneurallanguage,hoffmann2022trainingcomputeoptimallargelanguage}. For certain application domains, data is scarce, which limits our ability to train effective models. 
The problem of developing techniques for training discrete diffusion models in data-challenging settings remains underexplored. 

For continuous diffusion and autoregressive models, the standard solution in data-challenging settings is to augment the data pool by borrowing samples from \emph{related} domains with wider sample availability. A common practice is to insert related data at a fixed, uniform rate during training, a technique known as \textit{data-mixing}~\citep{albalak2023efficientonlinedatamixing}. Data-mixing improves upon training on target data alone, but also biases the learned distribution. An alternative solution to data-mixing is to pre-train on all the available data and then finetune to the target domain. This finetuning approach is prone to overfitting issues that must be controlled by carefully tuning the learning rate, the number of parameters to be finetuned, and the duration of the finetuning. 
For continuous diffusion, more rigorous techniques for training generative models with limited and out-of-distribution samples have been developed~\citep{daras2025ambientdiffusionomnitraining, daras2025how, ambient_diffusion, ambient_dps, hosseintabar2025diffem, dataloops, sfbdomni, kawar2023gsure, Aali_2023, modi2025generative, park2025measurement, matsuzaki2026score, sfbd, rozet2024learning, em_diffusion, zhang2025restoration, tewari2023diffusion, wei2026ambient}. One key idea, proposed in Ambient Diffusion Omni~\citep{daras2025ambientdiffusionomnitraining}, is to use out-of-distribution samples only for certain diffusion times, leveraging the fact that noise contracts distributional differences and hence even out-of-distribution samples can be useful for learning in the high-noise regime, while images that agree with the target locally can also be used at low noise. These techniques have not been explored in the context of discrete diffusion.

\begin{figure}
    \centering
    \includegraphics[width=0.8\linewidth]{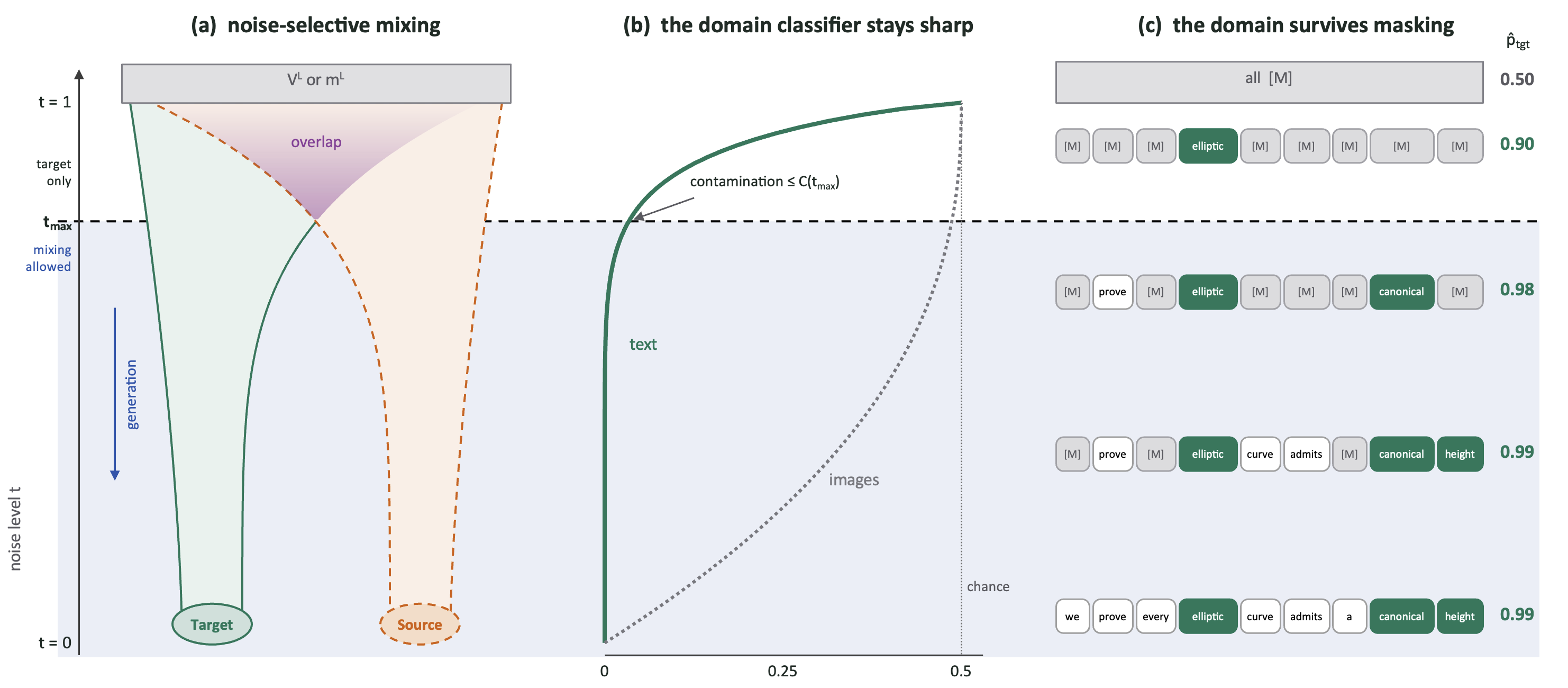}
    \caption{\textbf{Noise-selective mixing.}
    \textbf{(a)} Noising widens the target (green) and source (orange) distributions until both reach the prior $\boldsymbol\pi$ at $t = 1$. They overlap (purple) only at high noise, so RefineMix admits source data only below the cutoff $t_{\max}$ (blue). \textbf{(b)} How well a classifier tells the domains apart, against the noise level: for text (solid), it stays sharp until almost every token is masked and for images under Gaussian noise (dotted), it goes blind early. \textbf{(c)} Why text stays separable: masked sequences at decreasing noise (top to bottom), with the classifier's probability of the target. A single target-specific token (green) settles the domain, whereas shared tokens (white) add little.}
    \label{fig:main-figure}
\end{figure}

In this paper, we rigorously study the problem of training \emph{discrete} diffusion models in data-challenging regimes.  
Our central insight is that, contrary to continuous diffusion, in discrete diffusion for textual data, the noising mechanism is not an effective way to erase distribution shifts. Instead, two observations motivate our strategy. First, we observe that even under extreme token masking, different domains remain separable due to the existence of domain-specific tokens. 
Second, at low noise, different domains do not have any shared data distributional support, and hence, training on data from both domains does not hurt the denoising capabilities on the target domain. On the contrary, if the domains share some local structure, data-mixing at low noise can \textit{improve learning} on the target domain without introducing sampling biases.

These insights lead to \textbf{RefineMix}, a simple framework for training discrete diffusion models in data-scarce domains by leveraging out-of-distribution data without biasing the sampling distribution. RefineMix uses data only from the target domain in the high-noise regime and data from both domains in the low-noise regime. Notably, unlike in continuous diffusion, where imperfect data can also be used at high noise~\citep{daras2025ambientdiffusionomnitraining}, in discrete diffusion the related data can only be used at low noise. We justify this design in two steps. First, we prove that sampling stays on target as long as a classifier can still tell the domains apart at the cutoff. Second, we characterize the best cutoff, which trades the structure that the domains share, and that the related data can therefore teach, against the bias it introduces. 

Backing the theory with empirical results, we show that RefineMix matches or outperforms fine-tuning, data mixing, and conditional training on five domain pairs. We further use RefineMix to adapt pretrained diffusion language models of up to $7$B parameters to scarce domains, outperforming fine-tuning approaches. Finally, we show that the method is not limited to text and we move our interest to modeling protein sequences. We use RefineMix to finetune a diffusion protein model to a family of $197$ proteins and we nearly double the fraction of generated sequences that are new, fold, and belong to the family.

\section{Background}\label{sec:background}

A diffusion model corrupts clean data with a forward process and learns to reverse it. In discrete diffusion~\citep{austin2023structureddenoisingdiffusionmodels}, each token is a one-hot vector $\mathbf z \in \mathcal V$ over a vocabulary of size $N$. We write $\mathrm{Cat}(\cdot\,;\mathbf u)$ for the categorical distribution with probability vector $\mathbf u$ in the simplex $\Delta^{N-1}$, whose vertices form $\mathcal V$. The forward process mixes the clean token with a fixed prior $\boldsymbol\pi \in \Delta^{N-1}$:
\begin{equation}\label{equ_bg:forward}
p(\mathbf z_t \mid \mathbf z_0) = \mathrm{Cat}\big(\mathbf z_t;\, \alpha_t\, \mathbf z_0 + (1-\alpha_t)\, \boldsymbol\pi\big),
\end{equation}
where $\alpha_t$ decreases from $\alpha_0 = 1$ to $\alpha_1 = 0$, so that $\mathbf z_1 \sim \mathrm{Cat}(\boldsymbol\pi)$ carries no information about $\mathbf z_0$ (Appendix~\ref{app:kernels} gives the underlying Markov chain). The \emph{masked} kernel takes $\boldsymbol\pi = \mathbf m$, the one-hot vector of a \texttt{[MASK]} token~\citep{sahoo2024simpleeffectivemaskeddiffusion,shi2025simplifiedgeneralizedmaskeddiffusion}, and the \emph{uniform} kernel takes $\boldsymbol\pi = \mathbf 1/N$~\citep{austin2023structureddenoisingdiffusionmodels,lou2024discrete,schiff2025simpleguidance,sahoo2025diffusionduality}. A network $\mathbf x_\theta(\mathbf z_t) \in \Delta^{N-1}$ predicts the clean token, and the reverse step plugs it into the posterior of the forward process, $p_\theta(\mathbf z_s \mid \mathbf z_t) \coloneqq p(\mathbf z_s \mid \mathbf z_t, \mathbf z_0 = \mathbf x_\theta(\mathbf z_t))$ for $s < t$. Training minimizes the negative ELBO (NELBO), a sum of per-step KL divergences over a grid of $T$ steps $s(i) < t(i)$,
\begin{equation}\label{equ_bg:loss}
\mathcal{L}_{\text{diffusion}} = \sum_{i=1}^{T} \mathbb{E}\Big[D_{\mathrm{KL}}\big(p(\mathbf{z}_{s(i)}\mid \mathbf{z}_{t(i)},\mathbf{z}_0)\,\big\|\, p_{\theta}(\mathbf{z}_{s(i)}\mid \mathbf{z}_{t(i)})\big)\Big].
\end{equation}

\noindent\textbf{Sequences.} So far we have used the notation of tokens, but most categorical domains consist of sequences of tokens. A clean sequence $x_0 = [\mathbf z^1_0, \dots, \mathbf z^L_0]$ is noised token by token, so $p(x_t \mid x_0) = \prod_{\ell} p(\mathbf z^\ell_t \mid \mathbf z^\ell_0)$, and throughout, $x$ denotes sequences and $\mathbf z$ tokens. The network sees the whole noised sequence and predicts every clean token, and the loss \eqref{equ_bg:loss} is summed over positions.

\noindent\textbf{Problem definition.} We study the setting where we are given a few samples from a target domain and abundant samples from a source domain. Formally, we are given a small set of $N_T$ samples from the desired target distribution $p$ and $N_S \gg N_T$ samples from a distinct source distribution $q$. Throughout, we use $\hat p$ and $\hat q$ to denote the empirical distributions. We use $p_t(x_t) = \sum_{x_0} p(x_t \mid X_0=x_0)\, p(x_0)$ to denote the noisy distribution of the target domain at diffusion time $t$ and similarly $q_t(x_t)$ for the source domain distribution at time $t$. To avoid notation clutter, we use $p(x_t)$ and $p_t(x_t)$ interchangeably.

\section{RefineMix Method and Theoretical Justification}\label{sec:method}

In this Section, we introduce our RefineMix algorithm, we develop intuition around it, and finally we provide a theoretical justification for the proposed method under certain distributional assumptions. %

\noindent\textbf{Setup and baseline algorithms.} As discussed in the Introduction, training on the target data alone leads to poor performance due to overfitting. Merging both corpora and training on all the data avoids overfitting but biases the learned distribution, as the trained discrete diffusion can produce source text as readily as target text. The other standard option, pre-training on everything and then finetuning on the target-only data, achieves a middle ground between overfitting and biased sampling that is in practice controlled by carefully tuning several hyperparameters such as the learning rate and the duration of the finetuning. 

\subsection{The RefineMix algorithm}
RefineMix improves over these baselines by deciding, at each noise level, whether to mix the training data. Recall that a standard training step of discrete diffusion picks a clean sequence, noises it to a random level $t$, and trains the network to denoise it with the loss \eqref{equ_bg:loss}. RefineMix deviates from this implementation by changing the pool of sentences that we can sample from based on $t$. Below a cutoff $t_{\max}$, the sequence comes from the source with probability $\lambda$ and from the target otherwise, and above it, only from the target. Algorithm~\ref{alg:refinemix} defines the whole method. We remark that the schedule, the loss, the network, and the sampler remain as in regular discrete diffusion training.

\begin{algorithm}[t]
\caption{RefineMix training step}\label{alg:refinemix}
\begin{algorithmic}[1]
\Require target samples $\hat p$, source samples $\hat q$, cutoff $t_{\max}$, source rate $\lambda$, denoiser $\mathbf x_\theta$
\State draw $t \sim \mathcal U(0,1)$
\State draw $x_0 \sim \lambda\, \hat q + (1-\lambda)\, \hat p$ if $t \le t_{\max}$, else $x_0 \sim \hat p$
\State noise $x_0$ to $x_t \sim p(x_t \mid x_0)$
\State take a gradient step on the loss \eqref{equ_bg:loss} at level $t$
\end{algorithmic}
\end{algorithm}

\noindent\textbf{Building intuition I: why mixing at high-noise levels hurts.}
To build some intuition for the algorithm, it is useful to think of how the model behaves at inference time. During generation, the time flows backwards: from pure noise down to clean text. If two domains have different vocabularies (e.g. domain specific tokens), the appearance of a single such token during the early inference steps fully controls the domain at which the rest of the generation will arrive. Hence, RefineMix trains at high noise using only the target domain data to ensure that only the target domain will be generated. 

\noindent\textbf{Building intuition II: why mixing at low-noise levels helps.}
After a certain diffusion time, which we call $t_{\max}$, with high probability a target-specific token has appeared and hence there is no risk of drifting from the target domain. Hence, for the remaining steps, using a model trained on data from both domains is harmless. If anything, if the two domains share any local or latent structure, weight sharing enables synergistic learning in such times.
Figure~\ref{fig:main-figure}b,c visualizes this analysis. 

\noindent\textbf{Building intuition III: discrete vs continuous diffusion.}
It is useful to contrast what happens in discrete diffusion over text with continuous diffusion over images, where ambient diffusion uses noisy data only at high noise, with a modified loss~\citep{consistent_diffusion_meets_tweedie}. If the image distributions differ in their high frequencies, due to the spectral power law of natural images, Gaussian noise will quickly erase their differences so at high noise the two distributions become indistinguishable~\citep{daras2025ambientdiffusionomnitraining}. Masking and scrambling do not erase the domain of a text because a single domain-specific token can give away the origin of the sentence. The rest of this section makes the intuition precise. Theorem~\ref{thm:safe} determines what mixing costs in sampling, and Section~\ref{sec:method-tradeoff} shows what it costs in likelihood and what it can gain.

\subsection{Theoretical Justification}\label{sec:method-setup}
We now make this intuition precise by separating the effect of mixing into what it can cost and gain.

\noindent\textbf{What mixing can cost.} The main risk of mixing is that the model samples from the wrong domain. Given a noised sequence, the mixed denoiser must implicitly decide whether it is completing a source or a target sequence, so the cost of mixing reduces to a binary classification problem. We analyze this problem for the population distributions and determine the probability of generating a source sequence (Theorem~\ref{thm:safe}). This result identifies which cutoffs are safe during inference.

\noindent\textbf{What mixing can gain.} At the population level, mixing cannot help: the target denoiser is already exact, since it is learned from infinitely many target samples. Any benefit must therefore come from finite data. We thus work with the empirical distributions and compare a model trained on the $N_T$ target samples alone with one that also uses the $N_S$ source samples. Assuming that the source can teach what the domains share, Theorem~\ref{thm:gain} derives what this help is worth and what it costs.

\noindent\textbf{Setup.} To analyze Algorithm~\ref{alg:refinemix}, we rewrite it in the notation of Section~\ref{sec:background}. Below the cutoff, the algorithm draws a source sequence with probability $\lambda$ and a target sequence otherwise, so it trains on the mixture $m(x_0) = \lambda\, q(x_0) + (1-\lambda)\, p(x_0)$. Because both domains share the forward process, the noised mixture is $m(x_t) = \lambda\, q(x_t) + (1-\lambda)\, p(x_t)$, and Bayes' rule gives the mixed denoiser $m(x_0 \mid x_t) = p(x_t \mid x_0)\, m(x_0) / m(x_t)$. The denoiser of RefineMix is therefore
\begin{equation}\label{eq:refinemix-denoiser}
p_{\text{RefineMix}}(x_0 \mid x_t) =
\begin{cases}
m(x_0 \mid x_t), & t \le t_{\max},\\[2pt]
p(x_0 \mid x_t), & t > t_{\max}.
\end{cases}
\end{equation}
Finally, we assume that the two domains are distinct, meaning that no sequence can come from both: $p(x_0)\,q(x_0) = 0$ for every $x_0$. Since $p$ and $q$ are population distributions, this describes an ideal model; the trained network approximates it.

\subsection{What Mixing Can Cost}\label{sec:method-purity}
With the denoiser of RefineMix written down, we can ask what mixing costs. We now determine how often RefineMix generates a sample from the wrong domain. The result rests on one observation: to complete a noised sequence, the mixed denoiser must first infer which domain the sequence came from. We make this precise with a classifier that makes this inference and two facts about it. First, the mixed denoiser follows the classifier. Second, the domains are distinct and only the target denoiser acts above the cutoff, so the mixed denoiser always starts from a noised target sequence. Together, these facts imply that the cost of mixing equals how often the classifier mistakes such a sequence for a source sequence (Theorem~\ref{thm:safe}).

\noindent\textbf{Classifier.} Since the mixed denoiser must infer the domain, we define a classifier $c(x_t)$: given a noised sequence $x_t$ from either domain, it predicts which domain the sequence came from. We label the source $y=1$ and the target $y=0$, so that under the mixture $m$ the label has prior $\mathbb{P}(y=1)=\lambda$. By Bayes' rule, the probability that $x_t$ came from the source is:
\begin{equation}\label{eq:resp}
c(x_t) \coloneqq \mathbb{P}(y = 1 \mid x_t)
= \frac{\mathbb{P}(x_t \mid y = 1)\, \mathbb{P}(y = 1)}{\mathbb{P}(x_t)}
= \frac{\lambda\, q(x_t)}{m(x_t)}.
\end{equation}

\noindent\textbf{Fact 1: the mixed denoiser follows the classifier.} With the classifier defined, we can split up the mixed denoiser. By applying Bayes' rule, the mixed denoiser is:
\begin{equation}\label{eq:blend}
m(x_0 \mid x_t) = c(x_t)\, q(x_0 \mid x_t) + \big(1 - c(x_t)\big)\, p(x_0 \mid x_t).
\end{equation}
In effect, the mixed denoiser first classifies a noised sequence and then denoises it as a source sequence with probability $c(x_t)$ and as a target sequence with probability $1-c(x_t)$. What remains is what it outputs on a target sequence, and which sequences it receives.

\noindent\textbf{Fact 2: distinct domains and a target-only start.} Fact 1 tells us how the mixed denoiser completes any sequence. To turn it into the probability of generating source text, we need two more things: what it outputs and what it receives. First, the mixed denoiser outputs a source sequence with probability exactly $c(x_t)$. This follows from \eqref{eq:blend}, because the domains are distinct, $p(x_0)\,q(x_0) = 0$, so only the source denoiser can produce a source sequence. Second, the mixed denoiser always receives a noised target sequence, $x_{t_{\max}} \sim p(x_{t_{\max}})$, because only the target denoiser acts above the cutoff. Together, RefineMix generates a source sequence with probability $\mathbb{E}_{x \sim p(x_{t_{\max}})}[c(x)]$, which Theorem~\ref{thm:safe} expresses through the overlap $C$.

\begin{restatable}[RefineMix generates domain-pure sequences]{theorem}{thmsafe}\label{thm:safe}
Let the domains be distinct, $p(x_0)\,q(x_0) = 0$ for every $x_0$, let $A = \{x_0 : q(x_0) > 0\}$ be the set of source sequences, and let RefineMix use the denoiser \eqref{eq:refinemix-denoiser}. Define the \emph{overlap}
\begin{equation}\label{eq:overlap}
C(t) \coloneqq \mathbb{E}_{x \sim q(x_t)}\big[\, 1 - c(x) \,\big],
\end{equation}
the average probability that the classifier assigns to the target when shown a noised source sequence. Then, under either kernel, masked or uniform, the sequence $\hat x_0$ that RefineMix generates from pure noise is a source sequence with probability
\begin{equation}\label{eq:thrm}
\mathbb{P}(\hat x_0 \in A)
\;=\; \frac{\lambda}{1-\lambda}\, C(t_{\max})
\;\le\; \frac{\lambda}{1-\lambda}\, C(t')
\qquad \text{for every } t' \ge t_{\max}.
\end{equation}
\end{restatable}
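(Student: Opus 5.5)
The plan is to reduce the event $\{\hat x_0\in A\}$ to the classifier evaluated at the cutoff, then rewrite that quantity as the overlap $C$. The monotonicity claim follows from a data-processing argument.

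\textbf{Step 1: the state at the cutoff is exactly target-distributed.} Above $t_{\max}$ the sampler uses the exact target denoiser $p(x_0\mid x_t)$. The reverse step is the forward posterior $p(\mathbf z_s\mid \mathbf z_t,\mathbf z_0)$ averaged over $p(x_0\mid x_t)$, which equals the true reverse transition $p(x_s\mid x_t)$ of the target's forward Markov chain. Starting from $x_1\sim \mathrm{Cat}(\boldsymbol\pi)^{\otimes L}=p_1$ and inducting down the time grid gives $x_{t_{\max}}\sim p_{t_{\max}}$. This holds for both kernels, since they only enter through the common forward process.

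\textbf{Step 2: below the cutoff the sampler draws $x_0\sim m(x_0\mid x_{t_{\max}})$.} The same argument applies to the mixture $m$, whose forward process is the shared kernel. Each reverse step with denoiser $m(x_0\mid x_t)$ is the exact reverse transition $m(x_s\mid x_t)$. Composing these transitions from $t_{\max}$ to $0$ yields, by the Markov property of the forward chain, the conditional $m(x_0\mid x_{t_{\max}})$. This holds for any starting value, in particular for $x_{t_{\max}}\sim p_{t_{\max}}$.

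I expect this to be the main technical point. One has to check that the chain $x_0\to x_{t(i)}\to\cdots$ is Markov, which follows from the kernel composition $\alpha_{t'}/\alpha_t$ given in Appendix~\ref{app:kernels}. Given that, the product of the reverse transitions marginalizes to the posterior, and the discretization grid plays no role.

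\textbf{Step 3: from the classifier to the overlap.} By \eqref{eq:blend}, $m(A\mid x)=c(x)\,q(A\mid x)+(1-c(x))\,p(A\mid x)$. Distinctness gives $q(A\mid x)=1$ and $p(A\mid x)=0$, so
\[
\mathbb P(\hat x_0\in A)=\mathbb E_{x\sim p_{t_{\max}}}[c(x)]=\sum_x \frac{\lambda\, p(x)\,q(x)}{m(x)} .
\]
Likewise, $C(t)=\sum_x q(x)\,(1-\lambda)\,p(x)/m(x)$. Comparing the two sums gives the identity $\mathbb P(\hat x_0\in A)=\frac{\lambda}{1-\lambda}C(t_{\max})$.

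\textbf{Step 4: monotonicity of $C$.} Write $C(t)=\sum_x q_t(x)\,f\big(p_t(x)/q_t(x)\big)$ with $f(u)=(1-\lambda)u/(\lambda+(1-\lambda)u)$, which is concave on $[0,\infty)$. Points with $q_t=0$ contribute zero, consistent with the perspective convention.

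For $t'\ge t$, the forward process gives $x_{t'}$ from $x_t$ through a fixed Markov kernel $K$. Then
\[
\frac{p_{t'}(y)}{q_{t'}(y)}=\mathbb E_{x\sim q(x_t\mid x_{t'}=y)}\!\left[\frac{p_t(x)}{q_t(x)}\right].
\]
Jensen's inequality for concave $f$ gives $f\big(p_{t'}/q_{t'}\big)\ge \mathbb E\big[f(p_t/q_t)\big]$. Averaging over $y\sim q_{t'}$ yields $C(t')\ge C(t)$, and taking $t=t_{\max}$ gives the stated inequality.
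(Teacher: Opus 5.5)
Your Steps 1--3 follow the paper's proof. The exact target steps keep $x_t\sim p_t$ down to the cutoff. The exact mixed steps compose to $m(x_0\mid x_{t_{\max}})$. Fact~1 plus distinctness then turns $\mathbb{P}(\hat x_0\in A)$ into $\mathbb{E}_{p_{t_{\max}}}[c]=\frac{\lambda}{1-\lambda}C(t_{\max})$. One caveat: the grid indeed plays no role in the composition, but the equality needs $t_{\max}$ to be a grid point, which the paper assumes explicitly. Step~4 is the paper's monotonicity lemma in another form. Your $q\,f(p/q)$ is exactly the paper's $h(q,p)=(1-\lambda)qp/(\lambda q+(1-\lambda)p)$. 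Where you apply Jensen to the $q$-posterior, the paper uses concavity and positive homogeneity of $h$, that is, superadditivity.

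The one step that is wrong as written is the identity $p_{t'}(y)/q_{t'}(y)=\mathbb{E}_{x\sim q(x_t\mid x_{t'}=y)}[p_t(x)/q_t(x)]$. It requires $p_t\ll q_t$, which fails for the masked kernel, the main case in the paper. A fully unmasked target sequence $x_0$ has $p_t(x_0)=\alpha_t^L p(x_0)>0$ but $q_t(x_0)=0$. It can still reach, for instance, the all-mask sequence at level $t'$, so its mass enters $p_{t'}(y)$ but not your expectation. A one-token example already breaks it: with $L=1$, $p=\delta_a$, $q=\delta_b$ and $y=\mathbf m$, the left side is $1$ and the right side is $(1-\alpha_t)/(1-\alpha_{t'})<1$. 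In general you only have $p_{t'}(y)/q_{t'}(y)\ge\mathbb{E}_{x\sim q(x_t\mid x_{t'}=y)}[p_t(x)/q_t(x)]$. The repair is one line, because $f$ is increasing: $f(p_{t'}/q_{t'})\ge f(\mathbb{E}[p_t/q_t])\ge\mathbb{E}[f(p_t/q_t)]$. Averaging over $y\sim q_{t'}$ then gives $C(t')\ge C(t)$ as before. The paper's superadditivity formulation never runs into this. Since $h$ is defined and superadditive on all of $[0,\infty)^2$, the mass of $p_t$ outside the support of $q_t$ needs no separate treatment.
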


\noindent\textbf{Reading the theorem.} The theorem reduces the cost of mixing to a single quantity, the overlap, and to know which cutoffs are safe, we need to know how it grows with noise. The overlap $C(t)$ measures how often the classifier mistakes a noised source sequence for a target one. In Figure~\ref{fig:main-figure}a, it is the purple overlap of the two noised domains. Since noising only removes evidence and never adds it, this overlap only grows with $t$ (Lemma~\ref{lem:monotone-collision}). Hence, any level $t' \ge t_{\max}$ also bounds the cost. In one sentence: as long as the classifier can still tell the domains apart at the cutoff, RefineMix rarely generates samples from the wrong domain. For text, the classifier stays sharp until almost every token is masked (Intuition~III), which leaves a large range of safe cutoffs. Figure~\ref{fig:balacc} shows this in practice: a trained classifier, which can only be less accurate than the optimal one, separates the domains almost up to $t = 1$. The theorem tells us which cutoffs are safe, but not which one is best.

\subsection{What Mixing Can Gain} \label{sec:method-tradeoff}
In this section, we show what the benefit of mixing is for the estimated denoiser, compared to training on the target data alone. Since the target denoiser is already exact at the population level, we move to the empirical setting, where the denoiser is learned from finite data. For the source to help, the two domains must share something. Since the domains are distinct, no sequence of one domain occurs in the other, so what they share cannot be the sequences themselves. To make this precise, we let each sequence be generated from a latent variable and assume that the step from content to sequence, which we call the \emph{realization}, is shared across domains. In text, for example, this means that both domains follow the same rules of syntax, agreement, and subword spelling. We further assume that, for learning the shared factor, a source sequence is as informative as a target sequence. From these two assumptions, we derive what the source is worth and what it costs: the gain it brings below the cutoff is traded against a price that is paid once, at the cutoff (Theorem~\ref{thm:gain}).

\noindent\textbf{Latent content.} To make precise what the domains share, we let each sequence arise in two steps: a domain first picks a \emph{content} $v$, and a \emph{realization} $r(x_0 \mid v)$, which is shared across the domains.
\begin{assumption}[Shared realization]\label{ass:shared}
Each clean sequence $x_0$ determines its content $v$, that is, what the text is about, and both domains realize content in tokens in the same way: $p(x_0) = \sum_v p(v)\, r(x_0 \mid v)$ and $q(x_0) = \sum_v q(v)\, r(x_0 \mid v)$.
\end{assumption}
Hence the domains differ only in the content they use, and since they are distinct, $p(v)$ and $q(v)$ have disjoint supports. Next, we show how this split appears in the denoiser.

\noindent\textbf{The denoiser splits.} To see which part of the denoiser the source can improve, we apply the latent content to it. The noised realization $r(x_t \mid v)$ is also shared between the domains, because the forward process is shared. By Bayes' rule, the denoiser splits into two factors,
\begin{equation}\label{eq:split}
p(x_0 \mid x_t) = \sum_v \underbrace{p(v \mid x_t)}_{\text{specific}}\; \underbrace{r(x_0 \mid x_t, v)}_{\text{shared}}, %
\end{equation} 
The same holds for $q$ and $m$. The two factors decide the prediction at different noise levels. The first, \emph{specific} factor $p(v \mid x_t)$ infers the content from the noised sequence, and it dominates at high noise, because there almost nothing is visible, $p(v \mid x_t) \to p(v)$ as $t \to 1$, and the model has to fall back on the content distribution of the domain. The second, \emph{shared} factor $r(x_0 \mid x_t, v)$ writes out the clean sequence given the content, and it is all that matters at low noise, because there the visible tokens already reveal the content, $p(v \mid x_t) \to \delta_{v(x_0)}$ as $t \to 0$, and the model only has to fill in the missing tokens. Since this factor is the same for both domains, the source can help to learn it.

\noindent\textbf{What each model learns.} We have now seen at which noise level each factor decides the prediction, and we turn to how each factor is learned. By the chain rule of the KL divergence (Appendix~\ref{app:gain-proofs}), the error of a trained denoiser splits into the error of the specific factor and the error of the shared factor. The specific factor can only be learned from the $N_T$ target samples. In contrast, the shared factor is the same for both domains, so below the cutoff it is learned from both corpora. We therefore assume that the shared factor learned from the source carries over to the target.
\begin{assumption}[Rate and transfer]\label{ass:rate}
Let the shared factor be learned from $N$ samples, where source and target samples count equally. Then its error at level $t$ satisfies $e_N(t) = e(t)/N + o(1/N)$ for a nonnegative, integrable error per sample $e$, where
\[
e_N(t) \coloneqq \mathbb{E}\, \mathbb{E}_{x_t \sim p(x_t)}\, \mathbb{E}_{v \sim p(v \mid x_t)}\, D_{\mathrm{KL}}\big(r(\cdot \mid x_t, v) \,\|\, \hat r_N(\cdot \mid x_t, v)\big).
\]
\end{assumption}
Assumption~\ref{ass:rate} lets source and target samples count equally. Since Algorithm~\ref{alg:refinemix} uses both domains below the cutoff, we need to know how many equally weighted samples it learns the shared factor from. Weighting each target sample by $(1-\lambda)/N_T$ and each source sample by $\lambda/N_S$, the effective sample size of~\citet{kish1965survey} gives $N_{\mathrm{eff}} \coloneqq \big[(1-\lambda)^2/N_T + \lambda^2/N_S\big]^{-1}$, which equals $N_T + N_S$ for $\lambda = \rho \coloneqq N_S/(N_T+N_S)$.

Having seen where the source helps, we now compare the two trained models, target-only training and RefineMix. For this, we need a measure of how good a model is, the gain, and the price of mixing; as the measure, we use the NELBO on the target. Writing $\ell_\theta(x_0, t)$ for the level-$t$ term of the loss \eqref{equ_bg:loss}, and $\hat\theta_0$ and $\hat\theta_{t_{\max}}$ for the models trained by Algorithm~\ref{alg:refinemix} at cutoff $0$ (target only) and at cutoff $t_{\max}$ (RefineMix), the target NELBO and the gain are
\begin{equation}\label{eq:delta}
L_p(\theta) \coloneqq \int_0^1 \mathbb{E}_{x_0 \sim p}\, \ell_\theta(x_0, t)\, dt, \qquad \Delta(t_{\max}) \coloneqq \mathbb{E}\big[L_p(\hat\theta_0)\big] - \mathbb{E}\big[L_p(\hat\theta_{t_{\max}})\big],
\end{equation}
where the expectation is over the training samples, so that $\Delta > 0$ exactly when RefineMix beats target-only training. The price is given by $B(t_{\max}) \coloneqq \mathbb{E}_{x \sim p(x_{t_{\max}})}\big[D_{\mathrm{KL}}\big(p(v \mid x) \,\|\, m(v \mid x)\big)\big]$, the bias that the source introduces into the specific factor; it is governed by the same classifier as Theorem~\ref{thm:safe} (Appendix~\ref{app:gain-proofs}). With the gain and the price defined, we can state how the two trade off.
\begin{theorem}[Shared realization learned against price paid]\label{thm:gain}
Let the domains be distinct, let Assumptions~\ref{ass:shared} and~\ref{ass:rate} hold, and use either kernel. Then
\begin{equation}\label{eq:gain}
\Delta(t_{\max}) \;=\; \underbrace{\Big(\frac{1}{N_T} - \frac{1}{N_{\mathrm{eff}}}\Big) \int_0^{t_{\max}} e(t)\, dt}_{\text{gain}} \;-\; \underbrace{B(t_{\max})}_{\text{price}} \;+\; \text{h.o.},
\end{equation}
where h.o.\ collects terms of higher order in $1/N_T$. Moreover, the best cutoff $t^{\star}$, the largest maximizer of the leading terms, is nonincreasing in $N_T$.
\end{theorem}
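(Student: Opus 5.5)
The plan is to decompose the level-$t$ target loss of a trained model into a specific part and a shared part using \eqref{eq:split}, and then compare the two trained models term by term. At each level $t$, the per-level NELBO term in excess of its population minimum is an expected KL between the true denoiser $p(x_0\mid x_t)$ and the learned one. Both are mixtures over the content $v$ of the shared factor weighted by the specific factor. Writing the joint $p(v,x_0\mid x_t)=p(v\mid x_t)\,r(x_0\mid x_t,v)$ and using that $x_0$ determines $v$ (Assumption~\ref{ass:shared}), the KL on $x_0$ equals the KL on the pair $(v,x_0)$. The chain rule then splits it exactly into
\[
\mathbb{E}\,D_{\mathrm{KL}}\big(p(v\mid x_t)\,\|\,\hat p(v\mid x_t)\big) \;+\; \mathbb{E}_{v\sim p(v\mid x_t)}\,D_{\mathrm{KL}}\big(r(\cdot\mid x_t,v)\,\|\,\hat r(\cdot\mid x_t,v)\big).
\]
The terms $L_p(\hat\theta_0)$ and $L_p(\hat\theta_{t_{\max}})$ differ only through these two excesses, since the population entropy terms cancel in $\Delta$.

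Next I compare the two models level by level. Above $t_{\max}$ both are trained identically on target data, so their contributions coincide and cancel. Below $t_{\max}$ there are two differences. First, the shared factor is learned from $N_T$ samples in the target-only model and from $N_{\mathrm{eff}}$ effective samples in RefineMix. By Assumption~\ref{ass:rate}, the shared excess differs by $e(t)(1/N_T-1/N_{\mathrm{eff}})+o(1/N_T)$, and integrating over $[0,t_{\max}]$ gives the gain term. Second, the specific factor of RefineMix targets $m(v\mid x_t)$ rather than $p(v\mid x_t)$. This contributes a bias $\mathbb{E}_{x_t\sim p}D_{\mathrm{KL}}(p(v\mid x_t)\|m(v\mid x_t))$, plus estimation-error cross terms that are of higher order.

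The main obstacle is that this per-level bias is integrated over $t\le t_{\max}$ in the continuous-time NELBO, yet the statement charges it only once, as $B(t_{\max})$, at the cutoff. I would handle this through the structure of the reverse process rather than a per-level sum. Since the reverse chain under RefineMix switches from $p$ to $m$ at $t_{\max}$ and the domains are distinct, the content is resolved at the switch. Below the cutoff, the mixed specific factor restricted to target-supported states just renormalises $p(v\mid x_t)$ by $1-c(x_t)$. The accumulated path-space KL therefore telescopes to a single KL between $p(v\mid x_{t_{\max}})$ and $m(v\mid x_{t_{\max}})$. This is the data-processing and chain-rule argument on trajectories that also underlies Theorem~\ref{thm:safe}. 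I expect the delicate point to be justifying this telescoping for the NELBO, which is a path KL. I would first try writing the ELBO as a path-measure KL and applying the chain rule with $v$ revealed at time $t_{\max}$. I would also check that $B$ can be expressed through $c$ as $-\mathbb{E}\log(1-c)$ on target states, which links it to the same classifier.

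For monotonicity of $t^\star$, the leading objective has the form $F(t,N_T)=a(N_T)\,E(t)-B(t)$, where $E(t)=\int_0^t e$ is nondecreasing and $a(N_T)=1/N_T-1/N_{\mathrm{eff}}$. Fixing $\lambda$ (or $\lambda=\rho$ with $N_S$ fixed), $a$ is decreasing in $N_T$. The function $F$ then has decreasing differences in $(t,N_T)$: for $t_1>t_2$, the difference $F(t_1)-F(t_2)=a(E(t_1)-E(t_2))-(B(t_1)-B(t_2))$ is nonincreasing in $N_T$. Topkis' monotone comparative statics therefore gives that the largest maximizer is nonincreasing in $N_T$. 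Existence of a maximizer follows from continuity of $E$ and $B$ on $[0,1]$, and $B$ is continuous since it is built from finite mixtures of continuous kernels.
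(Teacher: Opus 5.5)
Your proposal follows the paper's proof essentially step for step: the chain-rule split into specific and shared parts, the gain $(1/N_T-1/N_{\mathrm{eff}})\int_0^{t_{\max}}e$ from the shared factor below the cutoff, the price as a path KL that collapses to $\mathbb{E}_{x\sim p(x_{t_{\max}})}D_{\mathrm{KL}}(p(x_0\mid x)\,\|\,m(x_0\mid x))=\mathbb{E}[-\log(1-c(x))]=B(t_{\max})$, and a monotone-comparative-statics argument for $t^\star$, which the paper proves directly rather than citing Topkis. Two small corrections: first, your telescoping step is exact because the composed $m$-reverse steps below $t_{\max}$ are the exact reverse chain of the forward process started at $m$, so both paths fill in $x_{t_1},\dots,x_{t_{k-1}}$ with the same forward bridge $p(\cdot\mid x_0,x_{t_{\max}})$ and only $x_0$ remains, and the identity $m(v\mid x)=(1-c(x))\,p(v\mid x)$ is used only at $t_{\max}$ to identify this KL with $B$; second, drop the parenthetical $\lambda=\rho$ variant, because then $B$ depends on $N_T$ through $\lambda$ and the decreasing-differences argument no longer applies, so the monotonicity claim holds for fixed $\lambda$, as in the paper.
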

\noindent\textbf{Reading the theorem.} The theorem states under the assumptions that the gain of RefineMix is a trade-off between the shared realization that is learned from the source and the bias that the source introduces into the content. The two terms depend on the cutoff in opposite ways. The first term accrues at every admitted level and scales with $1/N_T - 1/N_{\mathrm{eff}}$, which measures how badly the target needs the source. So, it is large when $N_T$ is small. In contrast, the bias stays small while the visible tokens still reveal the content, and it grows only at high noise, where the content becomes hard to infer. Hence the best cutoff sits where the two meet, and it moves in two ways. It rises as the target corpus shrinks. We also expect it to fall for domains that share fewer writing rules.

\noindent\textbf{Reading the best cutoff from two models.}\label{sec:method-diagnostic} The theorem tells us how the best cutoff moves: it rises as the target shrinks, and falls when the domains share fewer rules. It does not tell us where the best cutoff is, because $e(t)$ cannot be measured directly. However, we can read the trade-off directly from two models: the fully pooled model and the target-only model. On held-out target text, we compare their level-$t$ losses through the \emph{loss gap} $g(t) \coloneqq \bar\ell_m(t) - \bar\ell_p(t)$.
Where $g < 0$, the source teaches more than it costs, and where $g > 0$, the price has taken over. If each noise level were learned separately, then $\Delta(t_{\max}) = -\int_0^{t_{\max}} g(t)\, dt$, and the best cutoff would be the level at which $g$ turns positive. In our experiments, the loss gap indeed points to a good cutoff, but it requires training the pooled model to convergence. Alternatively, one can find the cutoff by sweeping over several values.

\begin{table}[t]
\centering
\small
\setlength{\tabcolsep}{3pt}
\begin{tabular}{l r r r r r}
\toprule
\textbf{Pair (src $\to$ tgt)} & \textbf{Tgt.\ tokens} & \textbf{Fine-tune} & \textbf{Data Mixing} & \textbf{Conditional} & \textbf{RefineMix ($t_\text{max}$)} \\
\midrule
Math $\to$ Stat               & 12.96\,M & 19.92\,$\pm$\,0.01 & 19.61\,$\pm$\,0.01 & 19.10\,$\pm$\,0.00 & \textbf{18.29\,$\pm$\,0.01} (0.7) \\
Physics $\to$ Hep-Ex         &  4.71\,M &  7.49\,$\pm$\,0.02 &  7.17\,$\pm$\,0.01 &  7.09\,$\pm$\,0.01 & \textbf{ 6.71\,$\pm$\,0.02} (0.8) \\
\midrule
cc\_news $\to$ Hep-Ex         &  4.71\,M &  8.04\,$\pm$\,0.05 &  8.05\,$\pm$\,0.01 & 11.28\,$\pm$\,0.01 & \textbf{ 7.39\,$\pm$\,0.01} (0.8) \\
cc\_news $\to$ Textiles      &  1.17\,M & 20.40\,$\pm$\,0.04 & 19.73\,$\pm$\,0.12 & 30.12\,$\pm$\,1.19 & \textbf{19.70\,$\pm$\,0.06} (0.9) \\
Patents-G $\to$ AG News      &  5.67\,M & 28.02\,$\pm$\,0.15 & 31.55\,$\pm$\,0.16 & 29.55\,$\pm$\,0.04 & \textbf{27.47\,$\pm$\,0.02} (0.2) \\
\bottomrule
\end{tabular}
\caption{\textbf{RefineMix beats the baselines when adapting a source-pretrained model.} Target validation perplexity (lower is better) after a source checkpoint (masked kernel, three seeds). Fine-tuning uses the target data alone, data mixing adds the source at every noise level, the conditional model adds it with a domain label, and RefineMix adds it only below $t_\text{max}$. }
\label{tab:two-stage}
\end{table}

\begin{figure}[t]
  \centering
  \begin{subfigure}[t]{0.3\linewidth}
    \centering
    \includegraphics[width=\linewidth]{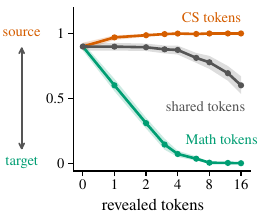}
    \caption{Few tokens settle the domain.}
    \label{fig:purity}
  \end{subfigure}
  \hfill
  \begin{subfigure}[t]{0.3\linewidth}
    \centering
    \includegraphics[width=\linewidth]{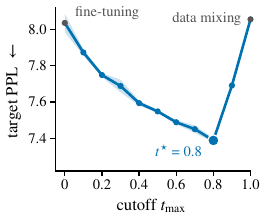}
    \caption{Best cutoff lies in between.}
    \label{fig:tradeoff-cutoff}
  \end{subfigure}
  \hfill
  \begin{subfigure}[t]{0.3\linewidth}
    \centering
    \includegraphics[width=\linewidth]{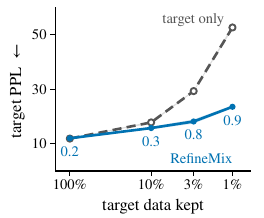}
    \caption{Scarcer target, larger gain.}
    \label{fig:tradeoff-scarcity}
  \end{subfigure}
   \caption{\textbf{Testing the theory} (masked kernel, three seeds). \textbf{(a)} CS $\to$ Math. A fully mixed model completes a noised sequence with $n$ revealed tokens from CS, Math, or both. \textbf{(b)} cc\_news $\to$ Hep-Ex, from the source checkpoint. Target validation perplexity against the cutoff, whose endpoints are fine-tuning and data mixing. \textbf{(c)} CS $\to$ Math, from scratch. Target validation perplexity against the fraction of the target corpus kept, for target-only training and RefineMix at its best cutoff.}
  \label{fig:ablations}
\end{figure}

\begin{table}[t]
\centering
\small
\setlength{\tabcolsep}{5pt}
\begin{tabular}{llcccc}
\toprule
Model & Target & Fine-tuning & Data mixing & LoRA & RefineMix ($t_\text{max}$) \\
\midrule
MDLM 130M & Lean/mathlib & 3.289{\scriptsize$\pm$0.010} & 3.439  & 3.775{\scriptsize$\pm$0.007}  & \textbf{3.234}{\scriptsize$\pm$0.003} (0.2) \\
\citep{sahoo2024simpleeffectivemaskeddiffusion} & Rare disease & 15.703{\scriptsize$\pm$0.033} & 15.770 & 18.859{\scriptsize$\pm$0.117}  & \textbf{14.933}{\scriptsize$\pm$0.016} (0.4) \\
 & ChemRxiv & 18.739{\scriptsize$\pm$0.002} & 20.260 & 25.248{\scriptsize$\pm$0.164} & \textbf{18.048}{\scriptsize$\pm$0.017} (0.4) \\
\midrule
SEDD 424M & Lean/mathlib & 3.027{\scriptsize$\pm$0.004} & 3.147  & 3.314{\scriptsize$\pm$0.013} & \textbf{2.987}{\scriptsize$\pm$0.005} (0.3) \\
\citep{lou2024discrete} & Rare disease & 14.072{\scriptsize$\pm$0.002} & 13.720 & 15.473{\scriptsize$\pm$0.009} & \textbf{13.275}{\scriptsize$\pm$0.055} (0.5) \\
 & ChemRxiv & 16.588{\scriptsize$\pm$0.031} & 17.340 & 19.659{\scriptsize$\pm$0.126} & \textbf{15.915}{\scriptsize$\pm$0.055} (0.4) \\
\midrule
SMDM 1.1B & Lean/mathlib & 2.991{\scriptsize$\pm$0.016} & \textbf{2.941}  & 3.180{\scriptsize$\pm$0.014} & \textbf{2.941}{\scriptsize$\pm$0.007} (0.7) \\
\citep{nie2025scaling} & Rare disease & 8.488{\scriptsize$\pm$0.009} & 8.808 & 8.629{\scriptsize$\pm$0.031} & \textbf{8.250}{\scriptsize$\pm$0.028} (0.6) \\
 & ChemRxiv & 11.022{\scriptsize$\pm$0.029} & 11.570 & 11.541{\scriptsize$\pm$0.004} & \textbf{10.649}{\scriptsize$\pm$0.019} (0.5) \\
\midrule
DiffuLLaMA 7B$^{*}$ & Lean/mathlib & 2.970{\scriptsize$\pm$0.013} & 3.512  & -- & \textbf{2.940}{\scriptsize$\pm$0.007} (0.4) \\
\citep{gong2025scaling} & Rare disease & 8.513{\scriptsize$\pm$0.024} & 9.203 & -- & \textbf{8.287}{\scriptsize$\pm$0.012} (0.5) \\
 & ChemRxiv & 10.661{\scriptsize$\pm$0.002} & 12.730  & -- & \textbf{10.471}{\scriptsize$\pm$0.001} (0.4) \\
\bottomrule
\end{tabular}
\caption{\textbf{RefineMix adapts pretrained diffusion language models to scarce domains.} Target validation perplexity (lower is better). LoRA (rank $16$) fine-tunes on the target alone. Mean $\pm$ std.\ over three seeds (data mixing: one). $^{*}$DiffuLLaMA uses LoRA (rank $64$) in every column.}
\label{tab:paper}
\end{table}

\begin{table}[t]
\centering
\small
\setlength{\tabcolsep}{5pt}
\begin{tabular}{l ccc|c}
\toprule
& Family & Folds & Novel & All three \\
\midrule
Real peroxiredoxins (ceiling) & 94.0{\scriptsize$\pm$0.8} & 98.2{\scriptsize$\pm$0.9} & 78.1{\scriptsize$\pm$6.8} & 75.1{\scriptsize$\pm$5.7} \\
\midrule
Fine-tuning & 51.8{\scriptsize$\pm$5.0} & 37.5{\scriptsize$\pm$4.6} & 99.0{\scriptsize$\pm$0.3} & 35.8{\scriptsize$\pm$4.6} \\
Data mixing & 36.2{\scriptsize$\pm$2.2} & 35.7{\scriptsize$\pm$2.2} & 98.7{\scriptsize$\pm$0.2} & 32.9{\scriptsize$\pm$2.2} \\
RefineMix ($t_{\max} = 0.7$) & \textbf{79.8}{\scriptsize$\pm$2.4} & \textbf{74.3}{\scriptsize$\pm$2.9} & 89.8{\scriptsize$\pm$1.8} & \textbf{63.3}{\scriptsize$\pm$2.0} \\
\midrule
Generic UniRef50 (floor) & 0.0 & 65.1 & 100.0 & 0.0 \\
\bottomrule
\end{tabular}
\caption{\textbf{RefineMix writes new, folding members of a protein family from $197$ examples.} Percentage of sequences generated by DPLM-650M that pass each test of Section~\ref{sec:experiments}; \emph{All three} counts sequences that pass all tests at once. Real held-out peroxiredoxins and generic proteins mark the ceiling and the floor. Mean $\pm$ standard deviation over three train--test splits, each with three seeds.}
\label{tab:protein}
\end{table}

\section{Experiments}\label{sec:experiments}
In the previous section, we introduced our algorithm RefineMix. It only changes which data the model sees at each noise level, so it can be implemented as a data loader. Hence it works on top of any architecture, at any stage of training, and with either kernel, masked or uniform. In this section, we show that RefineMix works in practice and test the predictions of the theory. First, we compare RefineMix to the standard baselines across domain pairs. Second, we run ablations in controlled settings: we show how much revealed target content keeps sampling on target (Theorem~\ref{thm:safe}), and how performance changes over the cutoff and the target size (Theorem~\ref{thm:gain}). Third, we adapt publicly available diffusion language models to new domains. Finally, we adapt a pretrained protein diffusion model to generate a specific protein family from only a few examples.

\noindent\textbf{Experiment Set up.} For the first part of the experiments, the comparison to the baselines and the ablations, we use source--target pairs from arXiv abstracts~\citep{arxiv_org_submitters_2024}, CC-News~\citep{hamborg2017newsplease}, BIGPATENT~\citep{sharma2019bigpatent}, and AG News~\citep{zhang2015character}. Every run uses the same $139$M-parameter DiT denoiser, trained identically. We evaluate the models by their target validation perplexity, because it is the quantity that the gain $\Delta$ measures (Theorem~\ref{thm:gain}). Since a mixed model can reach a high likelihood on the target domain and still write source text, we add two more metrics in Appendix~\ref{app:finetune}. There we report the generation frontier, the generative perplexity of the samples against their diversity over a sweep of sampling temperatures, and the fraction of generated samples that belong to the target domain. We report three seeds per setting, we choose the cutoff sweep start with the loss gap $g(t)$ of Section~\ref{sec:method-diagnostic}, and set $\lambda = 0.9$. 

\noindent\textbf{Comparison to baselines.} With the data and the model set up, we turn to the setting that is common in practice: a model is pretrained on the source, and one wants to fine-tune it on a smaller target corpus. We show that RefineMix beats this recipe, as well as the other standard baselines. For each pair and seed, we start from the same source-pretrained checkpoint and train a second stage in four ways: on the target data alone (fine-tuning), on both corpora at every noise level (data mixing), on both corpora with a domain label (conditional), and with RefineMix. 

Table~\ref{tab:two-stage} shows that RefineMix reaches the lowest or equal target perplexity on every pair. The best cutoff varies with the distance between the domains: it lies between $0.7$ and $0.9$ for close pairs and drops to $0.2$ for Patents-G~$\to$~AG News. RefineMix beats fine-tuning because fine-tuning on the small target corpus overwrites what pretraining learned, whereas RefineMix keeps this knowledge below the cutoff (Figure~\ref{fig:forgetting}). Finally, Appendix~\ref{app:finetune} shows that the samples of RefineMix stay on target and reach a better generation frontier than fine-tuning.

\noindent\textbf{Ablations.} The comparison shows that RefineMix works, and we now test the theory with one experiment per prediction. \textbf{First}, a few revealed target tokens should keep sampling on target (Theorem~\ref{thm:safe}). To test this, we take a fully mixed model ($t_{\max} = 1$) on CS~$\to$~Math with $1\%$ of the target, and reveal $n$ tokens of a noised sequence before sampling starts. With nothing revealed, the model writes source text at its training rate $\lambda$, whereas about four revealed target tokens turn it to the target domain, and shared tokens change nothing (Figure~\ref{fig:purity}; Appendix~\ref{app:bound} tests the theorem). \textbf{Second}, admitting the source at more noise levels helps, because the model learns more of what the domains share, and hurts, because the source pulls the model toward its own domain (Theorem~\ref{thm:gain}). The harm grows only at high noise and eventually outgrows the help, the best cutoff lies between fine-tuning and data mixing. The perplexity is indeed lowest at $t^\star = 0.8$, and every cutoff beats both fine-tuning ($t_{\max} = 0$) and data mixing ($t_{\max} = 1$) (Figure~\ref{fig:tradeoff-cutoff}). \textbf{Third}, the help from the source should matter more when the target is small (Theorem~\ref{thm:gain}). On CS~$\to$~Math, as we shrink the target from $100\%$ to $1\%$, RefineMix goes from a small improvement to more than halving the target-only perplexity (Figure~\ref{fig:tradeoff-scarcity}). Appendix~\ref{app:uniform} repeats these experiments under the uniform kernel (Figure~\ref{fig:ablations-uniform}). 

\noindent\textbf{Adapting pretrained models.} The experiments above confirm the theory in controlled settings with our own $139$M model. In practice, however, one rarely trains a base model and instead adapts a publicly available one to a desired domain. We therefore adapt four pretrained diffusion language models, from $130$M to $7$B parameters, to three scarce targets, using fractions of their publicly available pretraining corpora as the source. Since RefineMix only changes the data loader, it applies to every model unchanged, and it beats fine-tuning in all twelve settings (Table~\ref{tab:paper}), whereas LoRA, which trains fewer parameters, does worse than full fine-tuning, consistent with the observation that LoRA is suboptimal for diffusion language models~\citep{wang2026nara}. Appendix~\ref{app:pretrained-samples} compares the generation frontiers of RefineMix and the baselines. 

\noindent\textbf{Proteins.} So far, all our data domains were text. The same data scarcity problem arises for proteins. Imagine for example, trying to model a certain family of proteins with a limited number of known protein members. To study this problem, we test whether RefineMix can be used to finetune the large pretrained protein model DPLM-650M~\citep{wang2024dplm} to produce new members of peroxiredoxins, a family of antioxidant enzymes. UniProt lists $83{,}921$ peroxiredoxins, but only $274$ of them are curated by hand~\citep{uniprot2023}, and after removing duplicates, $237$ remain, so the scarcity here is real rather than simulated. We use $197$ of them as the target and generic UniRef50 proteins as the source, from which we remove every peroxiredoxin cluster. We split by sequence cluster, so that no test sequence is identical to a training sequence, which would otherwise reward memorization. After training, we generate $256$ sequences with the checkpoint of lowest validation perplexity. A generated sequence counts only if it passes three tests at once: it belongs to the family (HMMER~\citep{eddy2011hmmer} matches it to a Pfam profile of peroxiredoxins~\citep{mistry2021pfam}), it folds (ESMFold pLDDT $> 70$~\citep{lin2023esmfold}), and it is new (below $90\%$ identity to every training sequence). RefineMix produces roughly two times as many such sequences as fine-tuning, which overfits quickly on such a small dataset (Table~\ref{tab:protein}). Data mixing, in contrast, fails already in the Family column: by Theorem~\ref{thm:safe}, its samples come from the source at rate $\lambda$, and since we removed every peroxiredoxin cluster from the source, these samples cannot pass the family test.

\paragraph{Conclusion}\label{sec:conclusion}
Discrete noise, unlike Gaussian noise on images, does not erase the domain of a text. Therefore, out-of-domain data belongs only at low noise, whereas continuous diffusion can also use imperfect data at high noise. We therefore introduced RefineMix, which admits source data only below a cutoff. The method only changes the data loader, so it works on any architecture, at any stage of training, and with either kernel. Our theory shows why sampling stays in the target domain as long as a classifier can still tell the domains apart at the cutoff, and that the best cutoff balances what the source teaches against the bias it introduces at high noise. Across domain pairs, RefineMix beats data mixing, conditional training, LoRA, and full fine-tuning, and it also improves pretrained models from $130$M to $7$B parameters and a protein family. Its main limitations are that the cutoff must still be chosen, by a sweep or from two trained models, and that our theory describes an ideal model rather than the trained network. This leaves room for future work: a cheap diagnostic that predicts the cutoff before training, and for cutoffs chosen per sample rather than one for the whole corpus.

\subsection*{AI Use Statement}
In this work, we used generative AI tools for assistance with paper writing, editing figures, developing the mathematical theorems and proofs, implementing parts of the method and conducting the experiments. We have \textbf{not} used generative AI tools for conceptualization of the project, results interpretation and developing the structure and the narrative of the paper. We have reviewed all AI-assisted work. In particular, we reviewed the code, the proofs, all the paper edits and parsing of the results of the experiments. The authors fully understand and take ownership of all the materials presented in this work. We take responsibility for the final content of this work, including text, claims or artifacts produced with the aid of generative AI.

\bibliography{iclr2027_conference}

@misc{schiff2025simpleguidance,
      title={Simple Guidance Mechanisms for Discrete Diffusion Models}, 
      author={Yair Schiff and Subham Sekhar Sahoo and Hao Phung and Guanghan Wang and Sam Boshar and Hugo Dalla-torre and Bernardo P. de Almeida and Alexander Rush and Thomas Pierrot and Volodymyr Kuleshov},
      year={2025},
      eprint={2412.10193},
      archivePrefix={arXiv},
      primaryClass={cs.LG},
      url={https://arxiv.org/abs/2412.10193}, 
}

@misc{shi2025simplifiedgeneralizedmaskeddiffusion,
      title={Simplified and Generalized Masked Diffusion for Discrete Data}, 
      author={Jiaxin Shi and Kehang Han and Zhe Wang and Arnaud Doucet and Michalis K. Titsias},
      year={2025},
      eprint={2406.04329},
      archivePrefix={arXiv},
      primaryClass={cs.LG},
      url={https://arxiv.org/abs/2406.04329}, 
}

@misc{austin2023structureddenoisingdiffusionmodels,
      title={Structured Denoising Diffusion Models in Discrete State-Spaces}, 
      author={Jacob Austin and Daniel D. Johnson and Jonathan Ho and Daniel Tarlow and Rianne van den Berg},
      year={2023},
      eprint={2107.03006},
      archivePrefix={arXiv},
      primaryClass={cs.LG},
      url={https://arxiv.org/abs/2107.03006}, 
}

@misc{sahoo2024simpleeffectivemaskeddiffusion,
      title={Simple and Effective Masked Diffusion Language Models}, 
      author={Subham Sekhar Sahoo and Marianne Arriola and Yair Schiff and Aaron Gokaslan and Edgar Marroquin and Justin T Chiu and Alexander Rush and Volodymyr Kuleshov},
      year={2024},
      eprint={2406.07524},
      archivePrefix={arXiv},
      primaryClass={cs.CL},
      url={https://arxiv.org/abs/2406.07524}, 
}

@article{wei2026ambient,
  title={Ambient Diffusion Policy: Imitation Learning from Suboptimal Data in Robotics},
  author={Wei, Adam and Pfaff, Nicholas and Cohn, Thomas and Day{\i}, Arif Kerem and Daskalakis, Constantinos and Daras, Giannis and Tedrake, Russ},
  journal={arXiv preprint arXiv:2606.12365},
  year={2026}
}

@misc{wang2025finetuningdiscretediffusionmodels,
      title={Fine-Tuning Discrete Diffusion Models via Reward Optimization with Applications to DNA and Protein Design}, 
      author={Chenyu Wang and Masatoshi Uehara and Yichun He and Amy Wang and Tommaso Biancalani and Avantika Lal and Tommi Jaakkola and Sergey Levine and Hanchen Wang and Aviv Regev},
      year={2025},
      eprint={2410.13643},
      archivePrefix={arXiv},
      primaryClass={cs.LG},
      url={https://arxiv.org/abs/2410.13643}, 
}

@misc{peebles2023scalablediffusionmodelstransformers,
      title={Scalable Diffusion Models with Transformers}, 
      author={William Peebles and Saining Xie},
      year={2023},
      eprint={2212.09748},
      archivePrefix={arXiv},
      primaryClass={cs.CV},
      url={https://arxiv.org/abs/2212.09748}, 
}

@misc{arxiv_org_submitters_2024,
	title={arXiv Dataset},
	url={https://www.kaggle.com/dsv/7548853},
	DOI={10.34740/KAGGLE/DSV/7548853},
	publisher={Kaggle},
	author={arXiv.org submitters},
	year={2024}
}

@article{labs2025mercury,
  title   = {Mercury: Ultra-Fast Language Models Based on Diffusion},
  author  = {Inception Labs and Khanna, Samar and Kharbanda, Siddhant and Li, Shufan and Varma, Harshit and Wang, Eric and Birnbaum, Sawyer and Luo, Ziyang and Miraoui, Yanis and Palrecha, Akash and others},
  journal = {arXiv preprint arXiv:2506.17298},
  year    = {2025}
}

@article{lee2025genmol,
  title   = {{GenMol}: A Drug Discovery Generalist with Discrete Diffusion},
  author  = {Lee, Seul and Kreis, Karsten and Veccham, Srimukh Prasad and Liu, Meng and Reidenbach, Danny and Peng, Yuxing and Paliwal, Saee and Nie, Weili and Vahdat, Arash},
  journal = {arXiv preprint arXiv:2501.06158},
  year    = {2025}
}

@article{liu2023generative,
  title   = {Generative Diffusion Models on Graphs: Methods and Applications},
  author  = {Liu, Chengyi and Fan, Wenqi and Liu, Yunqing and Li, Jiatong and Li, Hang and Liu, Hui and Tang, Jiliang and Li, Qing},
  journal = {arXiv preprint arXiv:2302.02591},
  year    = {2023}
}

@misc{lou2024discrete,
  title         = {Discrete Diffusion Modeling by Estimating the Ratios of the Data Distribution},
  author        = {Lou, Aaron and Meng, Chenlin and Ermon, Stefano},
  year          = {2024},
  eprint        = {2310.16834},
  archivePrefix = {arXiv},
  url           = {https://arxiv.org/abs/2310.16834}
}

@misc{ou2025your,
  title         = {Your Absorbing Discrete Diffusion Secretly Models the Conditional Distributions of Clean Data},
  author        = {Ou, Jingyang and Nie, Shen and Xue, Kaiwen and Zhu, Fengqi and Sun, Jiacheng and Li, Zhenguo and Li, Chongxuan},
  year          = {2025},
  eprint        = {2406.03736},
  archivePrefix = {arXiv},
  url           = {https://arxiv.org/abs/2406.03736}
}

@inproceedings{sahoo2025diffusionduality,
  title     = {The Diffusion Duality},
  author    = {Sahoo, Subham Sekhar and Deschenaux, Justin and Gokaslan, Aaron and Wang, Guanghan and Chiu, Justin T. and Kuleshov, Volodymyr},
  booktitle = {Forty-second International Conference on Machine Learning},
  year      = {2025},
  url       = {https://openreview.net/forum?id=9P9Y8FOSOk}
}

@article{sahoo2025esoteric,
  title   = {Esoteric Language Models},
  author  = {Sahoo, Subham Sekhar and Yang, Zhihan and Akhauri, Yash and Liu, Johnna and Singh, Deepansha and Cheng, Zhoujun and Liu, Zhengzhong and Xing, Eric and Thickstun, John and Vahdat, Arash},
  journal = {arXiv preprint arXiv:2506.01928},
  year    = {2025}
}

@article{song2025seed,
  title   = {Seed Diffusion: A Large-Scale Diffusion Language Model with High-Speed Inference},
  author  = {Song, Yuxuan and Zhang, Zheng and Luo, Cheng and Gao, Pengyang and Xia, Fan and Luo, Hao and Li, Zheng and Yang, Yuehang and Yu, Hongli and Qu, Xingwei and others},
  journal = {arXiv preprint arXiv:2508.02193},
  year    = {2025}
}

@misc{daras2025ambientdiffusionomnitraining,
      title={Ambient Diffusion Omni: Training Good Models with Bad Data}, 
      author={Giannis Daras and Adrian Rodriguez-Munoz and Adam Klivans and Antonio Torralba and Constantinos Daskalakis},
      year={2025},
      eprint={2506.10038},
      archivePrefix={arXiv},
      primaryClass={cs.GR},
      url={https://arxiv.org/abs/2506.10038}, 
}

@misc{kaplan2020scalinglawsneurallanguage,
      title={Scaling Laws for Neural Language Models}, 
      author={Jared Kaplan and Sam McCandlish and Tom Henighan and Tom B. Brown and Benjamin Chess and Rewon Child and Scott Gray and Alec Radford and Jeffrey Wu and Dario Amodei},
      year={2020},
      eprint={2001.08361},
      archivePrefix={arXiv},
      primaryClass={cs.LG},
      url={https://arxiv.org/abs/2001.08361}, 
}

@misc{hoffmann2022trainingcomputeoptimallargelanguage,
      title={Training Compute-Optimal Large Language Models}, 
      author={Jordan Hoffmann and Sebastian Borgeaud and Arthur Mensch and Elena Buchatskaya and Trevor Cai and Eliza Rutherford and Diego de Las Casas and Lisa Anne Hendricks and Johannes Welbl and Aidan Clark and Tom Hennigan and Eric Noland and Katie Millican and George van den Driessche and Bogdan Damoc and Aurelia Guy and Simon Osindero and Karen Simonyan and Erich Elsen and Jack W. Rae and Oriol Vinyals and Laurent Sifre},
      year={2022},
      eprint={2203.15556},
      archivePrefix={arXiv},
      primaryClass={cs.CL},
      url={https://arxiv.org/abs/2203.15556}, 
}

@misc{albalak2023efficientonlinedatamixing,
      title={Efficient Online Data Mixing For Language Model Pre-Training}, 
      author={Alon Albalak and Liangming Pan and Colin Raffel and William Yang Wang},
      year={2023},
      eprint={2312.02406},
      archivePrefix={arXiv},
      primaryClass={cs.CL},
      url={https://arxiv.org/abs/2312.02406}, 
}

@misc{kleutgens2026guidedtransferlearningdiscrete,
      title={Guided Transfer Learning for Discrete Diffusion Models}, 
      author={Julian Kleutgens and Claudio Battiloro and Lingkai Kong and Benjamin Grewe and Francesca Dominici and Mauricio Tec},
      year={2026},
      eprint={2512.10877},
      archivePrefix={arXiv},
      primaryClass={cs.LG},
      url={https://arxiv.org/abs/2512.10877}, 
}

@article{saerens2002adjusting,
  author  = {Saerens, Marco and Latinne, Patrice and Decaestecker, Christine},
  title   = {Adjusting the Outputs of a Classifier to New a Priori Probabilities: A Simple Procedure},
  journal = {Neural Computation},
  volume  = {14},
  number  = {1},
  pages   = {21--41},
  year    = {2002},
  doi     = {10.1162/089976602753284446}
}

@inproceedings{hamborg2017newsplease,
  title     = {news-please: A Generic News Crawler and Extractor},
  author    = {Hamborg, Felix and Meuschke, Norman and Breitinger, Corinna and Gipp, Bela},
  booktitle = {Proceedings of the 15th International Symposium of Information Science},
  pages     = {218--223},
  year      = {2017}
}

@inproceedings{sharma2019bigpatent,
  title     = {{BIGPATENT}: A Large-Scale Dataset for Abstractive and Coherent Summarization},
  author    = {Sharma, Eva and Li, Chen and Wang, Lu},
  booktitle = {Proceedings of the 57th Annual Meeting of the Association for Computational Linguistics},
  pages     = {2204--2213},
  year      = {2019}
}

@inproceedings{zhang2015character,
  title     = {Character-level Convolutional Networks for Text Classification},
  author    = {Zhang, Xiang and Zhao, Junbo and LeCun, Yann},
  booktitle = {Advances in Neural Information Processing Systems},
  volume    = {28},
  year      = {2015}
}

@article{ni2025diffusion,
  title={Diffusion language models are super data learners},
  author={Ni, Jinjie and Liu, Qian and Dou, Longxu and Du, Chao and Wang, Zili and Yan, Hang and Pang, Tianyu and Shieh, Michael Qizhe},
  journal={arXiv preprint arXiv:2511.03276},
  year={2025}
}

@article{gao2025makes,
  title={What Makes Diffusion Language Models Super Data Learners?},
  author={Gao, Zitian and Luo, Haoming and Chen, Lynx and Liu, Jason Klein and Tao, Ran and Zhou, Joey and Dai, Bryan},
  journal={arXiv preprint arXiv:2510.04071},
  year={2025}
}

@misc{prabhudesai2026diffusion,
      title={Diffusion Beats Autoregressive in Data-Constrained Settings}, 
      author={Mihir Prabhudesai and Mengning Wu and Amir Zadeh and Katerina Fragkiadaki and Deepak Pathak},
      year={2025},
      eprint={2507.15857},
      archivePrefix={arXiv},
      primaryClass={cs.LG},
      url={https://arxiv.org/abs/2507.15857}, 
}

@inproceedings{ambient_diffusion,
 author = {Daras, Giannis and Shah, Kulin and Dagan, Yuval and Gollakota, Aravind and Dimakis, Alex and Klivans, Adam},
 booktitle = {Advances in Neural Information Processing Systems},
 editor = {A. Oh and T. Naumann and A. Globerson and K. Saenko and M. Hardt and S. Levine},
 pages = {288--313},
 publisher = {Curran Associates, Inc.},
 title = {Ambient Diffusion: Learning Clean Distributions from Corrupted Data},
 url = {https://proceedings.neurips.cc/paper_files/paper/2023/file/012af729c5d14d279581fc8a5db975a1-Paper-Conference.pdf},
 volume = {36},
 year = {2023}
}

@inproceedings{
ambient_dps,
title={Ambient Diffusion Posterior Sampling: Solving Inverse Problems with Diffusion Models Trained on Corrupted Data},
author={Asad Aali and Giannis Daras and Brett Levac and Sidharth Kumar and Alex Dimakis and Jon Tamir},
booktitle={The Thirteenth International Conference on Learning Representations},
year={2025},
url={https://openreview.net/forum?id=qeXcMutEZY}
}

@inproceedings{consistent_diffusion_meets_tweedie,
author = {Daras, Giannis and Dimakis, Alexandros G. and Daskalakis, Constantinos},
title = {Consistent diffusion meets tweedie: training exact ambient diffusion models with noisy data},
year = {2024},
publisher = {JMLR.org},
booktitle = {Proceedings of the 41st International Conference on Machine Learning},
articleno = {401},
numpages = {18},
location = {Vienna, Austria},
series = {ICML'24}
}

@inproceedings{
daras2025how,
title={How Much is a  Noisy Image Worth? Data Scaling Laws for Ambient Diffusion.},
author={Giannis Daras and Yeshwanth Cherapanamjeri and Constantinos Costis Daskalakis},
booktitle={The Thirteenth International Conference on Learning Representations},
year={2025},
url={https://openreview.net/forum?id=qZwtPEw2qN}
}

@article{dataloops,
  title={Ambient Dataloops: Generative Models for Dataset Refinement},
  author={Rodr{\'\i}guez-Mu{\~n}oz, Adri{\'a}n and Daspit, William and Klivans, Adam and Torralba, Antonio and Daskalakis, Constantinos and Daras, Giannis},
  journal={arXiv preprint arXiv:2601.15417},
  year={2026}
}

@article{hosseintabar2025diffem,
  title={Diffem: Learning from corrupted data with diffusion models via expectation maximization},
  author={Hosseintabar, Danial and Chen, Fan and Daras, Giannis and Torralba, Antonio and Daskalakis, Constantinos},
  journal={arXiv preprint arXiv:2510.12691},
  year={2025}
}

@misc{sfbdomni,
      title={SFBD-OMNI: Bridge models for lossy measurement restoration with limited clean samples}, 
      author={Haoye Lu and Yaoliang Yu and Darren Lo},
      year={2026},
      eprint={2512.17051},
      archivePrefix={arXiv},
      primaryClass={cs.LG},
      url={https://arxiv.org/abs/2512.17051}, 
}

@article{park2025measurement,
  title={Measurement score-based diffusion model},
  author={Park, Chicago Y and Shoushtari, Shirin and An, Hongyu and Kamilov, Ulugbek S},
  journal={arXiv preprint arXiv:2505.11853},
  year={2025}
}

@article{modi2025generative,
  title={Generative Modeling from Black-box Corruptions via Self-Consistent Stochastic Interpolants},
  author={Modi, Chirag and Han, Jiequn and Vanden-Eijnden, Eric and Bruna, Joan},
  journal={arXiv preprint arXiv:2512.10857},
  year={2025}
}

@article{kawar2023gsure,
  title={Gsure-based diffusion model training with corrupted data},
  author={Kawar, Bahjat and Elata, Noam and Michaeli, Tomer and Elad, Michael},
  journal={arXiv preprint arXiv:2305.13128},
  year={2023}
}

@article{matsuzaki2026score,
  title={SCoRe: Clean Image Generation from Diffusion Models Trained on Noisy Images},
  author={Matsuzaki, Yuta and Uchida, Seiichi and Takezaki, Shumpei},
  journal={arXiv preprint arXiv:2604.09436},
  year={2026}
}

@inproceedings{
sfbd,
title={Stochastic Forward{\textendash}Backward Deconvolution: Training Diffusion Models with Finite Noisy Datasets},
author={Haoye Lu and Qifan Wu and Yaoliang Yu},
booktitle={Forty-second International Conference on Machine Learning},
year={2025},
url={https://openreview.net/forum?id=WrWqv3mpQx}
}

@inproceedings{
rozet2024learning,
title={Learning Diffusion Priors from Observations by Expectation Maximization},
author={Fran{\c{c}}ois Rozet and G{\'e}r{\^o}me Andry and Francois Lanusse and Gilles Louppe},
booktitle={The Thirty-eighth Annual Conference on Neural Information Processing Systems},
year={2024},
url={https://openreview.net/forum?id=7v88Fh6iSM}
}

@inproceedings{em_diffusion,
author = {Bai, Weimin and Wang, Yifei and Chen, Wenzheng and Sun, He},
title = {An expectation-maximization algorithm for training clean diffusion models from corrupted observations},
year = {2025},
isbn = {9798331314385},
publisher = {Curran Associates Inc.},
address = {Red Hook, NY, USA},
booktitle = {Proceedings of the 38th International Conference on Neural Information Processing Systems},
articleno = {613},
numpages = {25},
location = {Vancouver, BC, Canada},
series = {NIPS '24}
}

@article{zhang2025restoration,
  title={Restoration score distillation: From corrupted diffusion pretraining to one-step high-quality generation},
  author={Zhang, Yasi and Chen, Tianyu and Wang, Zhendong and Wu, Ying Nian and Zhou, Mingyuan and Leong, Oscar},
  journal={arXiv preprint arXiv:2505.13377},
  year={2025}
}

@article{tewari2023diffusion,
  title={Diffusion with forward models: Solving stochastic inverse problems without direct supervision},
  author={Tewari, Ayush and Yin, Tianwei and Cazenavette, George and Rezchikov, Semon and Tenenbaum, Josh and Durand, Fr{\'e}do and Freeman, Bill and Sitzmann, Vincent},
  journal={Advances in Neural Information Processing Systems},
  volume={36},
  pages={12349--12362},
  year={2023}
}

@inproceedings{gong2025scaling,
  title     = {Scaling Diffusion Language Models via Adaptation from Autoregressive Models},
  author    = {Gong, Shansan and Agarwal, Shivam and Zhang, Yizhe and Ye, Jiacheng and Zheng, Lin and Li, Mukai and An, Chenxin and Zhao, Peilin and Bi, Wei and Han, Jiawei and Peng, Hao and Kong, Lingpeng},
  booktitle = {The Thirteenth International Conference on Learning Representations},
  year      = {2025},
  url       = {https://openreview.net/forum?id=j1tSLYKwg8}
}

@article{ye2025dream,
  title   = {Dream 7B: Diffusion Large Language Models},
  author  = {Ye, Jiacheng and Xie, Zhihui and Zheng, Lin and Gao, Jiahui and Wu, Zirui and Jiang, Xin and Li, Zhenguo and Kong, Lingpeng},
  journal = {arXiv preprint arXiv:2508.15487},
  year    = {2025}
}

@article{zhao2025d1,
  title   = {d1: Scaling Reasoning in Diffusion Large Language Models via Reinforcement Learning},
  author  = {Zhao, Siyan and Gupta, Devaansh and Zheng, Qinqing and Grover, Aditya},
  journal = {arXiv preprint arXiv:2504.12216},
  year    = {2025}
}

@article{zhu2025llada15,
  title   = {{LLaDA} 1.5: Variance-Reduced Preference Optimization for Large Language Diffusion Models},
  author  = {Zhu, Fengqi and Wang, Rongzhen and Nie, Shen and Zhang, Xiaolu and Wu, Chunwei and Hu, Jun and Zhou, Jun and Chen, Jianfei and Lin, Yankai and Wen, Ji-Rong and others},
  journal = {arXiv preprint arXiv:2505.19223},
  year    = {2025}
}

@inproceedings{nisonoff2025unlocking,
  title     = {Unlocking Guidance for Discrete State-Space Diffusion and Flow Models},
  author    = {Nisonoff, Hunter and Xiong, Junhao and Allenspach, Stephan and Listgarten, Jennifer},
  booktitle = {The Thirteenth International Conference on Learning Representations},
  year      = {2025},
  url       = {https://openreview.net/forum?id=XsgHl54yO7}
}

@article{bendavid2010theory,
  title   = {A theory of learning from different domains},
  author  = {Ben-David, Shai and Blitzer, John and Crammer, Koby and Kulesza, Alex and Pereira, Fernando and Vaughan, Jennifer Wortman},
  journal = {Machine Learning},
  volume  = {79},
  number  = {1--2},
  pages   = {151--175},
  year    = {2010},
  doi     = {10.1007/s10994-009-5152-4}
}

@inproceedings{ouyang2024transfer,
  title     = {Transfer Learning for Diffusion Models},
  author    = {Ouyang, Yidong and Xie, Liyan and Zha, Hongyuan and Cheng, Guang},
  booktitle = {Advances in Neural Information Processing Systems},
  volume    = {37},
  year      = {2024},
  note      = {arXiv:2405.16876}
}

@article{ho2022classifier,
  title   = {Classifier-Free Diffusion Guidance},
  author  = {Ho, Jonathan and Salimans, Tim},
  journal = {arXiv preprint arXiv:2207.12598},
  year    = {2022}
}

@inproceedings{gururangan2020dont,
  title     = {Don't Stop Pretraining: Adapt Language Models to Domains and Tasks},
  author    = {Gururangan, Suchin and Marasovi{\'c}, Ana and Swayamdipta, Swabha and Lo, Kyle and Beltagy, Iz and Downey, Doug and Smith, Noah A.},
  booktitle = {Proceedings of the 58th Annual Meeting of the Association for Computational Linguistics},
  year      = {2020}
}

@inproceedings{xie2023doremi,
  title     = {{DoReMi}: Optimizing Data Mixtures Speeds Up Language Model Pretraining},
  author    = {Xie, Sang Michael and Pham, Hieu and Dong, Xuanyi and Du, Nan and Liu, Hanxiao and Lu, Yifeng and Liang, Percy and Le, Quoc V. and Ma, Tengyu and Yu, Adams Wei},
  booktitle = {Advances in Neural Information Processing Systems},
  volume    = {36},
  year      = {2023}
}

@inproceedings{liu2024regmix,
  title     = {{RegMix}: Data Mixture as Regression for Language Model Pre-training},
  author    = {Liu, Qian and Zheng, Xiaosen and Muennighoff, Niklas and Zeng, Guangtao and Dou, Longxu and Pang, Tianyu and Jiang, Jing and Lin, Min},
  booktitle = {The Thirteenth International Conference on Learning Representations},
  year      = {2025},
  url       = {https://openreview.net/forum?id=5BjQOUXq7i}
}

@book{kish1965survey,
  title     = {Survey Sampling},
  author    = {Kish, Leslie},
  year      = {1965},
  publisher = {John Wiley \& Sons},
  address   = {New York}
}

@misc{wang2026nara,
      title={NaRA: Noise-Aware LoRA for Parameter-Efficient Fine-Tuning of Diffusion LLMs}, 
      author={Shuaidi Wang and Zhan Zhuang and Ruping Huang and Yu Zhang},
      year={2026},
      eprint={2605.29716},
      archivePrefix={arXiv},
      primaryClass={cs.AI},
      url={https://arxiv.org/abs/2605.29716}, 
}

@inproceedings{nie2025scaling,
  title     = {Scaling up Masked Diffusion Models on Text},
  author    = {Nie, Shen and Zhu, Fengqi and Du, Chao and Pang, Tianyu and Liu, Qian and Zeng, Guangtao and Lin, Min and Li, Chongxuan},
  booktitle = {International Conference on Learning Representations},
  year      = {2025}
}

@inproceedings{wang2024dplm,
  title     = {Diffusion Language Models Are Versatile Protein Learners},
  author    = {Wang, Xinyou and Zheng, Zaixiang and Ye, Fei and Xue, Dongyu and Huang, Shujian and Gu, Quanquan},
  booktitle = {Proceedings of the 41st International Conference on Machine Learning},
  series    = {Proceedings of Machine Learning Research},
  volume    = {235},
  pages     = {52309--52333},
  publisher = {PMLR},
  year      = {2024},
  url       = {https://proceedings.mlr.press/v235/wang24ct.html}
}

@article{uniprot2023,
  title   = {{UniProt}: the Universal Protein Knowledgebase in 2023},
  author  = {{The UniProt Consortium}},
  journal = {Nucleic Acids Research},
  volume  = {51},
  number  = {D1},
  pages   = {D523--D531},
  year    = {2023},
  doi     = {10.1093/nar/gkac1052}
}

@article{mistry2021pfam,
  title   = {{Pfam}: The protein families database in 2021},
  author  = {Mistry, Jaina and Chuguransky, Sara and Williams, Lowri and Qureshi, Matloob and Salazar, Gustavo A. and Sonnhammer, Erik L. L. and Tosatto, Silvio C. E. and Paladin, Lisanna and Raj, Shriya and Richardson, Lorna J. and Finn, Robert D. and Bateman, Alex},
  journal = {Nucleic Acids Research},
  volume  = {49},
  number  = {D1},
  pages   = {D412--D419},
  year    = {2021},
  doi     = {10.1093/nar/gkaa913}
}

@article{lin2023esmfold,
  title   = {Evolutionary-scale prediction of atomic-level protein structure with a language model},
  author  = {Lin, Zeming and Akin, Halil and Rao, Roshan and Hie, Brian and Zhu, Zhongkai and Lu, Wenting and Smetanin, Nikita and Verkuil, Robert and Kabeli, Ori and Shmueli, Yaniv and dos Santos Costa, Allan and Fazel-Zarandi, Maryam and Sercu, Tom and Candido, Salvatore and Rives, Alexander},
  journal = {Science},
  volume  = {379},
  number  = {6637},
  pages   = {1123--1130},
  year    = {2023},
  doi     = {10.1126/science.ade2574}
}

@article{eddy2011hmmer,
  title   = {Accelerated Profile {HMM} Searches},
  author  = {Eddy, Sean R.},
  journal = {PLoS Computational Biology},
  volume  = {7},
  number  = {10},
  pages   = {e1002195},
  year    = {2011}
}

@inproceedings{Aali_2023,
   title={Solving Inverse Problems with Score-Based Generative Priors learned from Noisy Data},
   url={http://dx.doi.org/10.1109/IEEECONF59524.2023.10477042},
   DOI={10.1109/ieeeconf59524.2023.10477042},
   booktitle={2023 57th Asilomar Conference on Signals, Systems, and Computers},
   publisher={IEEE},
   author={Aali, Asad and Arvinte, Marius and Kumar, Sidharth and Tamir, Jonathan I.},
   year={2023},
   month=Oct, pages={837–843} }
\bibliographystyle{iclr2027_conference}

\appendix
\clearpage
\addtocontents{toc}{\protect\setcounter{tocdepth}{2}}  %
\renewcommand{\contentsname}{Contents and Appendix}
\tableofcontents
\clearpage  %
\section{Related Work}\label{app:related}
 
\noindent\textbf{Discrete diffusion language models and how they are adapted.} Discrete diffusion language models now train at the scale of autoregressive models~\citep{labs2025mercury,song2025seed}. The models in this paper belong to the interpolating family, with an absorbing-state or a uniform-state kernel~\citep{austin2023structureddenoisingdiffusionmodels,lou2024discrete,sahoo2024simpleeffectivemaskeddiffusion,shi2025simplifiedgeneralizedmaskeddiffusion,schiff2025simpleguidance,sahoo2025diffusionduality}. When unique tokens are scarce, these models use a fixed corpus better than autoregressive models do~\citep{ni2025diffusion,prabhudesai2026diffusion,gao2025makes}, but these studies repeat one corpus rather than borrow from a related one. Existing ways to bring outside knowledge into a diffusion model act only after its training data is fixed. They convert the weights of a pretrained autoregressive model~\citep{gong2025scaling,ye2025dream}, post-train a pretrained diffusion model toward a task or a reward~\citep{zhao2025d1,zhu2025llada15,wang2025finetuningdiscretediffusionmodels}, adapt its parameters to the noise level~\citep{wang2026nara}, or steer a fixed model at sampling time with a classifier or a learned ratio~\citep{nisonoff2025unlocking,schiff2025simpleguidance}. Closest to our setting, guided transfer learning freezes a source denoiser and trains a small ratio network on the target~\citep{kleutgens2026guidedtransferlearningdiscrete}, as transfer-guided diffusion does for images~\citep{ouyang2024transfer}. None of these methods changes which data the diffusion objective is trained on at each noise level, which is exactly what RefineMix does.

\noindent\textbf{Data mixing for a scarce domain.} For autoregressive language models, the usual remedies for a domain with little text are to continue pretraining on the in-domain text that exists~\citep{gururangan2020dont}, or to pool it with related corpora. The pooling proportions are then chosen by a proxy model~\citep{xie2023doremi}, adapted online~\citep{albalak2023efficientonlinedatamixing}, or fitted from the loss of small mixtures~\citep{liu2024regmix}. Domain adaptation theory explains why pooling helps and when it hurts: the added source samples lower the variance of the estimate, while the distance between the domains sets the bias~\citep{bendavid2010theory}. To keep the domains apart at generation time, the pooled model is often conditioned on a domain label~\citep{ho2022classifier}. All of these methods, however, give each corpus one proportion for the whole run. In continuous diffusion, ambient diffusion~\citep{consistent_diffusion_meets_tweedie} uses noisy samples only \emph{above} their own noise level, with a loss modified for that noise. RefineMix does the opposite, as it uses samples of another domain unaltered, with the standard loss, and only \emph{below} a cutoff. Ambient Diffusion Omni~\citep{daras2025ambientdiffusionomnitraining} adds a low-noise regime closer to ours, in which out-of-distribution images are used unaltered up to the level at which a classifier can no longer tell their crops from target crops. RefineMix uses the opposite criterion. It admits the source up to the level at which a classifier can still tell the domains apart, which is what keeps sampling on target (Theorem~\ref{thm:safe}). Hence, for discrete diffusion, where masking does not erase the domain, only the low-noise regime carries over, and with the opposite criterion.

\section{Experimental details}\label{app:details}
This section lists what is needed to rerun the experiments of Section~\ref{sec:experiments}: the setup shared by all text experiments, and what each experiment adds to it.

\paragraph{Data.} Text is tokenized with \texttt{bert-base-uncased} ($|\mathcal V| = 30{,}522$), whose \texttt{[MASK]} token serves as the absorbing state. The documents of a corpus are concatenated with \texttt{[CLS]} between them and cut into non-overlapping $512$-token blocks (\texttt{[CLS]}, $510$ text tokens, \texttt{[CLS]}) without padding, after a $90/10$ train--validation split by document (seed $42$). The data is listed in Table~\ref{tab:data-sizes} with the resulting sizes. 
\begin{table}[ht]
\centering
\small
\setlength{\tabcolsep}{5pt}
\begin{tabular}{l l r}
\toprule
Corpus & Tokenizer & Tokens \\
\midrule
arXiv Math     & \texttt{bert-base-uncased} & $103.2$\,M \\
arXiv Stat     & (our $139$M DiT) & $14.4$\,M \\
arXiv Physics  &  & $46.5$\,M \\
arXiv Hep-Ex   &  & $5.3$\,M \\
arXiv CS       &  & $193.1$\,M \\
CC-News        &  & $373.5$\,M \\
Patents-G      &  & $35.9$\,M \\
Textiles       &  & $1.3$\,M \\
AG News        &  & $6.3$\,M \\
\midrule
OpenWebText    & GPT-2 & $113.3$\,M \\
Lean/mathlib   & (MDLM, SEDD) & $2.8$\,M \\
Rare disease   &  & $3.0$\,M \\
ChemRxiv       &  & $7.6$\,M \\
\midrule
SlimPajama-6B  & LLaMA-2 & $133.9$\,M \\
Lean/mathlib   & (SMDM, DiffuLLaMA) & $2.6$\,M \\
Rare disease   &  & $3.5$\,M \\
ChemRxiv       &  & $8.7$\,M \\
\bottomrule
\end{tabular}
\caption{\textbf{Corpora}, in tokens of the listed tokenizer. The upper block serves our $139$M model (Table~\ref{tab:two-stage} and Figure~\ref{fig:ablations}), the lower two the pretrained models of Table~\ref{tab:paper}. Of the two web corpora we use only one parquet file each of OpenWebText ($100{,}173$ documents) and of \texttt{DKYoon/SlimPajama-6B}. Figure~\ref{fig:tradeoff-scarcity} keeps $100\%$ down to $1\%$ of the Math training data.}
\label{tab:data-sizes}
\end{table}
\paragraph{Model and training.} All text models use the same DiT denoiser~\citep{peebles2023scalablediffusionmodelstransformers} with $139.3$M parameters: $12$ blocks, hidden size $768$, $12$ attention heads, MLP width $3{,}072$, rotary position embeddings, dropout $0.1$, and adaLN conditioning on the noise level. The masked kernel uses the SUBS parameterization~\citep{sahoo2024simpleeffectivemaskeddiffusion}, a log-linear schedule ($\epsilon = 10^{-3}$), and the continuous-time NELBO with $t \sim \mathcal U[10^{-3}, 1]$. We train with AdamW ($\beta = (0.9, 0.999)$, $\epsilon = 10^{-8}$, no weight decay) at a constant learning rate of $3\times10^{-4}$ after $2{,}500$ linear warm-up steps, with gradient clipping at $1.0$, bf16, and $256$ blocks per step ($64$ per micro-batch, $4$ accumulation steps) on one NVIDIA RTX PRO 6000 or B200. We keep an exponential moving average (EMA) of the weights with decay $0.9999$ and use it for all validation and sampling. Data mixing and RefineMix share one data loader: each item pairs a target block with a source block, draws $t$ per item, and trains on the source block with probability $\lambda = 0.9$ if $t \le t_{\max}$ and on the target block otherwise, with equal loss weights; data mixing is $t_{\max} = 1$. Every setting uses seeds $1$, $2$, and $3$.

\paragraph{Evaluation.} We report the target validation perplexity, the exponential of the per-token NELBO on the full target validation split. Validation during training uses a single pass and only selects checkpoints.

\paragraph{Comparison to baselines (Table~\ref{tab:two-stage}).} In the first stage, we train one source-only model per pair from scratch, for at most $30$k steps or $12$ hours, and keep the checkpoint with the lowest source validation perplexity. The second stage loads the weights and the EMA and restarts the optimizer and the warm-up. The conditional model adds a zero-initialized embedding for three labels (source, target, none) to the noise-level conditioning, trains on the labelled union of both corpora with label dropout $0.1$, and is sampled with the target label. To choose $t^\star$, we start where the loss gap $g(t)$ turns positive and move in steps of $0.1$ with seed $1$ until both neighbours are worse; seeds $2$ and $3$ then use this cutoff.

\paragraph{Revealed tokens (Figure~\ref{fig:purity}).} We train three fully mixed models on CS~$\to$~Math from scratch. To form the token groups, we score candidate words, whole words of at least three letters that are not stopwords and occur at least $100$ times in at least $10$ validation blocks, by their weighted log-odds between the two domains under an informative Dirichlet prior. Math and CS tokens are the $300$ candidates with the largest $z$-scores towards each domain, and shared tokens are the $300$ candidates above the $60$th frequency percentile with the smallest log-odds difference. For each group, we take $256$ Math validation blocks with at least $16$ tokens of the group, mask every position except the first $n$ of these $16$ tokens, with $n \in \{0, 1, 2, 3, 4, 6, 8, 12, 16\}$, and sample a completion with $1{,}000$ ancestral steps at temperature $1$, starting at $t = (512 - n)/512$. A domain classifier scores each completion. Each point averages $256$ completions per model over the three models.

\paragraph{Cutoff sweep (Figure~\ref{fig:tradeoff-cutoff}).} We run the second stage of cc\_news~$\to$~Hep-Ex with the protocol above at every $t_{\max} \in \{0, 0.1, \dots, 1.0\}$ with three seeds. The uniform-kernel version (Appendix~\ref{app:uniform}) uses a DUO model~\citep{sahoo2025diffusionduality}.

\paragraph{Target size (Figure~\ref{fig:tradeoff-scarcity}).} We train on CS~$\to$~Math from scratch under both kernels, with all CS blocks and $100\%$, $10\%$, $3\%$, or $1\%$ of the Math training blocks. Target-only training and RefineMix train for exactly $30$k steps without early stopping, with three seeds each. We do the same for the uniform kernel.

\paragraph{Pretrained models (Table~\ref{tab:paper}).} We adapt MDLM ($130$M) and SEDD ($424$M), pretrained on OpenWebText, and SMDM ($1.1$B) and DiffuLLaMA ($7$B), pretrained on SlimPajama, with a public sample of the respective pretraining corpus as the source (Table~\ref{tab:data-sizes}). Each target is split $90/10$ by document. Every run starts from the public weights with a fresh optimizer and EMA and otherwise follows the training setup above. DiffuLLaMA trains LoRA adapters (rank $64$, $\alpha = 128$) in every column, and the LoRA column uses rank $16$, $\alpha = 32$. Each run stops when its target validation perplexity stops improving or at its step cap, and we keep the best checkpoint. We search the cutoff in steps of $0.1$ starting where $g(t)$ turns positive until both neighbours are worse, and report the validation perplexity with three seeds.

\paragraph{Proteins (Table~\ref{tab:protein}).} The target is the peroxiredoxin family: $237$ reviewed Swiss-Prot sequences (median $198$ residues) in $118$ UniRef50 clusters. We split by whole cluster, three times with disjoint held-out sets, each with $197$ to $200$ training and $37$ to $40$ held-out sequences (about $39$k training tokens). The source is UniRef50, capped at $500{,}000$ sequences drawn round-robin across shards, since the dump is sorted by length, with all $102$ peroxiredoxin clusters removed. Every run starts from DPLM-650M, uses the ESM-2 vocabulary ($|\mathcal V| = 33$) with one sequence per $512$-token block, three folds, and three seeds. For the cutoff, we sweep over $0.3$, $0.5$ and $0.7$. From the checkpoint with the lowest validation perplexity, we generate $256$ sequences in $256$ denoising steps, with lengths drawn from the training distribution. A sequence counts as Family if \texttt{hmmsearch} hits PF00578, PF08534, or PF10417 at the profile's gathering threshold ($26.0$, $26.6$, $21.1$ bits), Folds if its mean ESMFold pLDDT exceeds $70$, and Novel if MMseqs2 finds no training sequence of its fold above $90\%$ identity; All three is the per-sample intersection.

\subsection{Results under the uniform kernel}\label{app:uniform}
The main text uses the masked kernel. Here we repeat the comparison on cc\_news~$\to$~Hep-Ex and the three ablations under the uniform kernel, and the conclusions are unchanged. RefineMix again reaches the lowest target perplexity (Table~\ref{tab:uniform}), a few revealed target tokens keep sampling on target, and the best cutoff lies between the two baselines and rises as the target shrinks (Figure~\ref{fig:ablations-uniform}).

\begin{table}[H]
\centering
\small
\setlength{\tabcolsep}{3pt}
\begin{tabular}{l r r r r r}
\toprule
\textbf{Pair (src $\to$ tgt)} & \textbf{Tgt.\ tokens} & \textbf{Fine-tune} & \textbf{Data mixing} & \textbf{Conditional} & \textbf{RefineMix ($t_\text{max}$)} \\
\midrule
cc\_news $\to$ Hep-Ex & 4.71\,M & 8.49\,$\pm$\,0.03 & 8.76\,$\pm$\,0.02 & 12.29$^{\dagger}$ & \textbf{7.96\,$\pm$\,0.01} (0.7) \\
\bottomrule
\end{tabular}
\caption{\textbf{Comparison to baselines under the uniform kernel.} Target validation perplexity, same setting as Table~\ref{tab:two-stage}. Mean $\pm$ standard deviation over three seeds; $^{\dagger}$one seed.}
\label{tab:uniform}
\end{table}

\begin{figure}[H]
  \centering
  \begin{subfigure}[t]{0.32\linewidth}
    \centering
    \includegraphics[width=\linewidth]{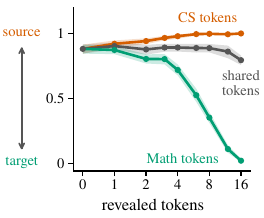}
    \caption{More tokens settle the domain.}
    \label{fig:uniform-purity}
  \end{subfigure}
  \hfill
  \begin{subfigure}[t]{0.32\linewidth}
    \centering
    \includegraphics[width=\linewidth]{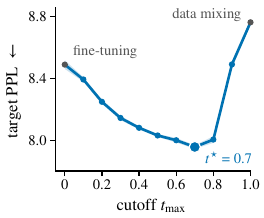}
    \caption{The best cutoff lies in between.}
    \label{fig:uniform-cutoff}
  \end{subfigure}
  \hfill
  \begin{subfigure}[t]{0.32\linewidth}
    \centering
    \includegraphics[width=\linewidth]{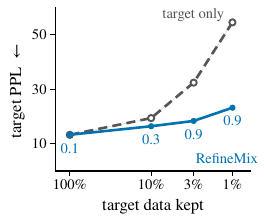}
    \caption{Scarcer target, larger gain.}
    \label{fig:uniform-scarcity}
  \end{subfigure}
  \caption{\textbf{The three predictions of the theory under the uniform kernel} (three seeds). Same experiments as Figure~\ref{fig:ablations}; in (b), the source checkpoint is a DUO model of the same size, trained on same datasets.}
  \label{fig:ablations-uniform}
\end{figure}

\subsection{Ablation of the source rate}\label{app:source-rate}
So far, we have chosen the cutoff and fixed the source rate at $\lambda = 0.9$. Besides the cutoff, Algorithm~\ref{alg:refinemix} has one more hyperparameter: the source rate $\lambda$, the probability that a training item below the cutoff is a source sample. To check whether $\lambda = 0.9$ is a good choice, we sweep $\lambda$ on CS~$\to$~Math with $1\%$ of the target, trained from scratch at cutoff $t_\text{max} = 0.9$, and measure both the target validation perplexity and the on-target rate of the samples at temperature $1.0$ (Figure~\ref{fig:source-rate}). On this pair, $\lambda = 0.9$ reaches the lowest target validation perplexity, and the samples stay on target also at higher source rates.

The sweep above tunes $\lambda$ for RefineMix. For data mixing, however, $\lambda$ plays a different role, because data mixing is the endpoint $t_{\max} = 1$ of RefineMix, just as fine-tuning is the endpoint $t_{\max} = 0$. We therefore have to decide how to treat $\lambda$ for this baseline, and why to compare to it at all. By Theorem~\ref{thm:safe}, the samples of data mixing come from the source at rate $\lambda$, so any $\lambda$ that keeps it on target also removes most of the source data, and we do not tune it. We still compare to data mixing, because it admits the source at every noise level and hence shows how much the source can improve the target likelihood. Since this likelihood says nothing about which domain the samples come from, we report its perplexity together with the on-target rate of its samples (Appendix~\ref{app:finetune}).

\begin{figure}[H]
  \centering
  \begin{subfigure}[t]{0.32\linewidth}
    \centering
    \includegraphics[width=\linewidth]{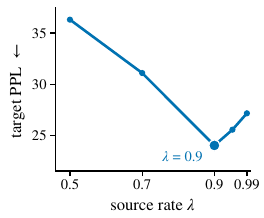}
    \caption{$\lambda = 0.9$ gives the lowest perplexity.}
    \label{fig:source-rate-ppl}
  \end{subfigure}
  \hspace{0.04\linewidth}
  \begin{subfigure}[t]{0.32\linewidth}
    \centering
    \includegraphics[width=\linewidth]{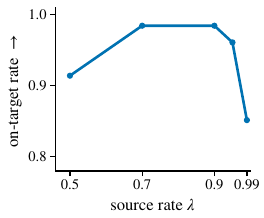}
    \caption{Samples stay mostly on target.}
    \label{fig:source-rate-ontarget}
  \end{subfigure}
  \caption{\textbf{Ablation of the source rate $\lambda$} (CS $\to$ Math, $1\%$ of the target, trained from scratch, masked kernel, one seed). \textbf{(a)} Target validation perplexity against $\lambda$. \textbf{(b)} On-target rate, the fraction of samples that the domain classifier assigns to the target, at sampling temperature $1.0$, against $\lambda$.}
  \label{fig:source-rate}
\end{figure}

\subsection{Additional results for \S\ref{sec:experiments}: RefineMix against fine-tuning}\label{app:finetune}
This section collects the two supporting results for Table~\ref{tab:two-stage}. Figure~\ref{fig:frontier-finetune} scores the generated samples of the same models, and Figure~\ref{fig:forgetting} measures what each model retains of the source checkpoint it started from. Figure~\ref{fig:balacc} shows how well a domain classifier separates the domains at each noise level. 
\begin{figure}[H]
  \centering
  \begin{subfigure}[t]{\linewidth}
    \centering
    \includegraphics[width=0.9\linewidth]{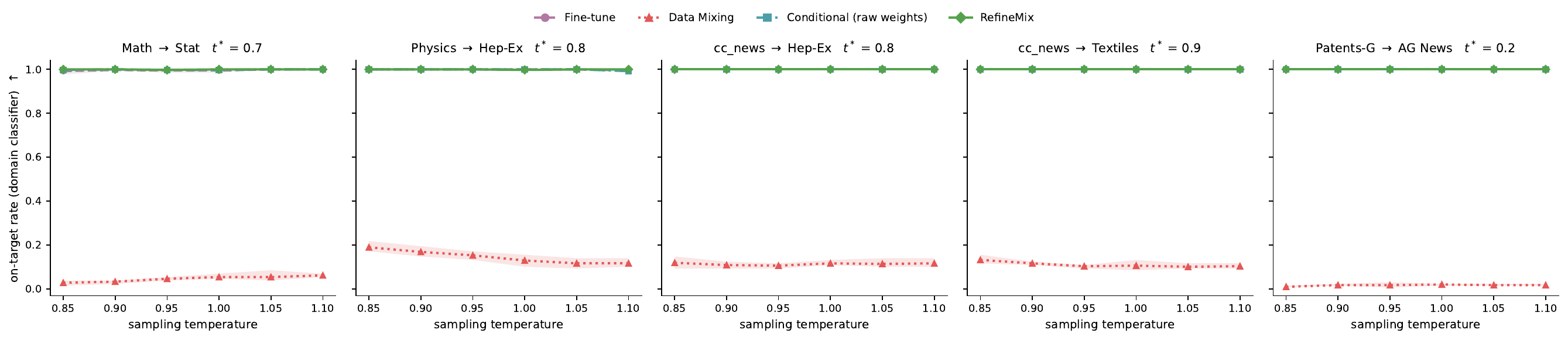}
    \caption{On-target rate against sampling temperature, all four models.}
    \label{fig:frontier-ontarget}
  \end{subfigure}
  \\[1em]
  \begin{subfigure}[t]{\linewidth}
    \centering
    \includegraphics[width=0.9\linewidth]{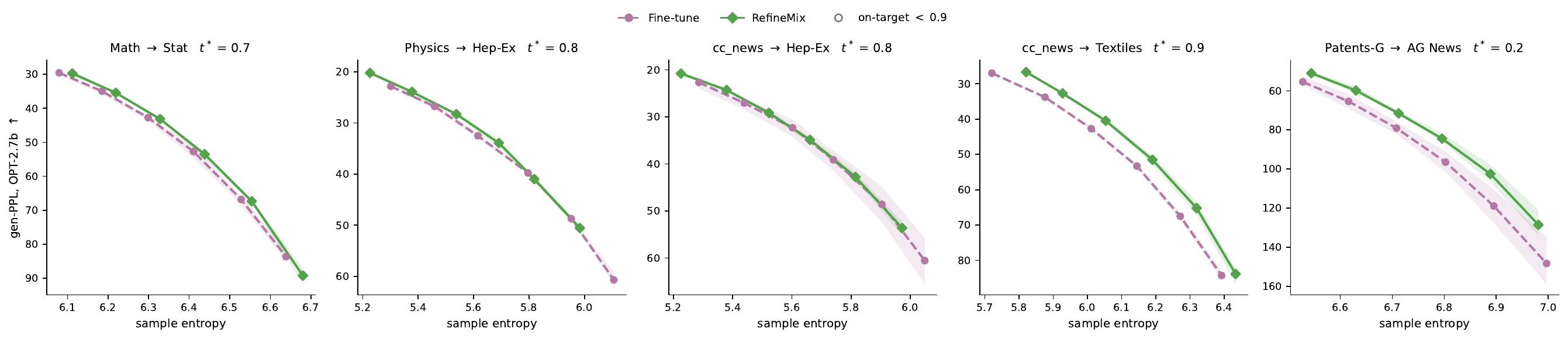}
    \caption{Generative perplexity against sample entropy, fine-tuning and RefineMix.}
    \label{fig:frontier-genppl}
  \end{subfigure}
  \caption{\textbf{Samples generated by the models of Table~\ref{tab:two-stage}.} For each of the five source--target pairs, all models start from the same source-pretrained checkpoint and are adapted to the target (masked kernel, mean over three seeds). We sample from pure noise over a sweep of sampling temperatures. \textbf{(a)} On-target rate: the fraction of samples that a domain classifier assigns to the target, against the sampling temperature, for fine-tuning, data mixing, the conditional model, and RefineMix. RefineMix stays on target at every temperature. \textbf{(b)} Generation frontier: generative perplexity of the samples, scored by OPT-2.7B, against their entropy, for fine-tuning and RefineMix. Perplexity is inverted, so up and to the right means more fluent and more diverse samples, and the dashed line marks real target text. The RefineMix frontier matches or lies above fine-tuning on every pair.}
  \label{fig:frontier-finetune}
\end{figure}

\begin{figure}[H]
  \centering
  \begin{subfigure}[t]{0.49\linewidth}
    \centering
    \includegraphics[width=\linewidth]{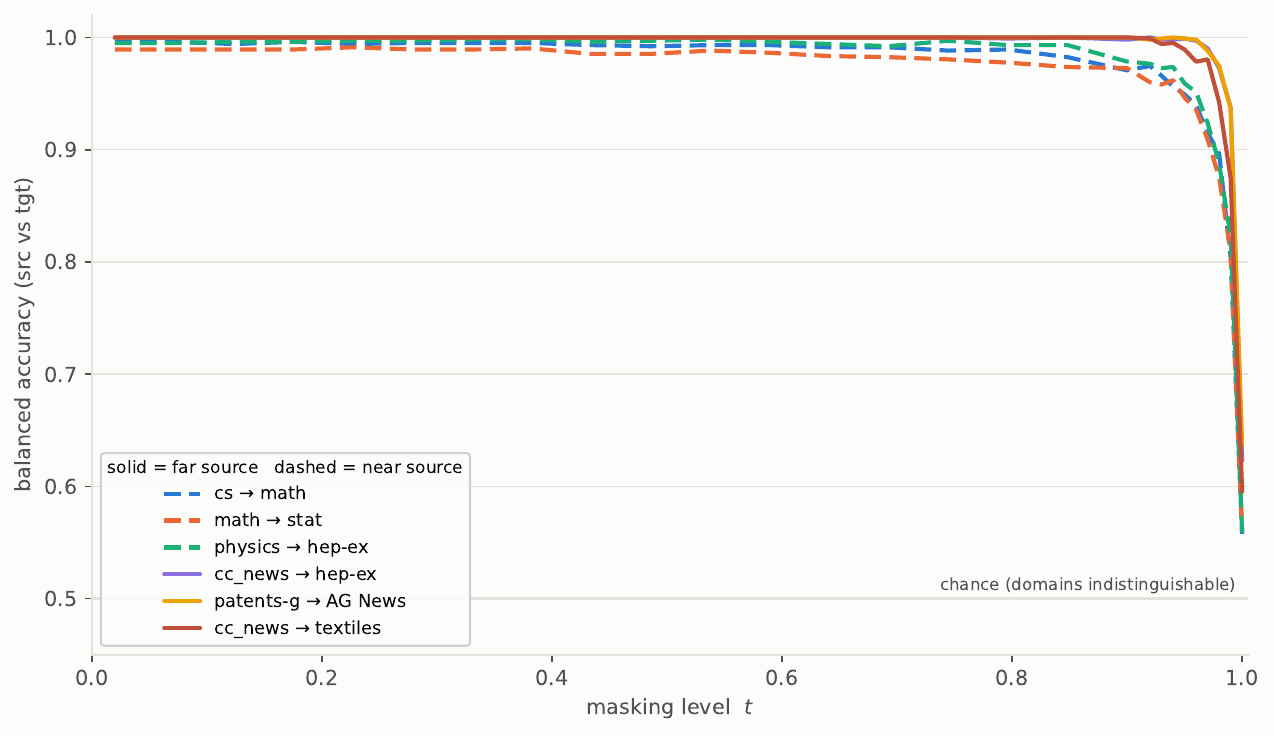}
    \caption{Masking kernel.}
    \label{fig:balacc-masking}
  \end{subfigure}
  \hfill
  \begin{subfigure}[t]{0.49\linewidth}
    \centering
    \includegraphics[width=\linewidth]{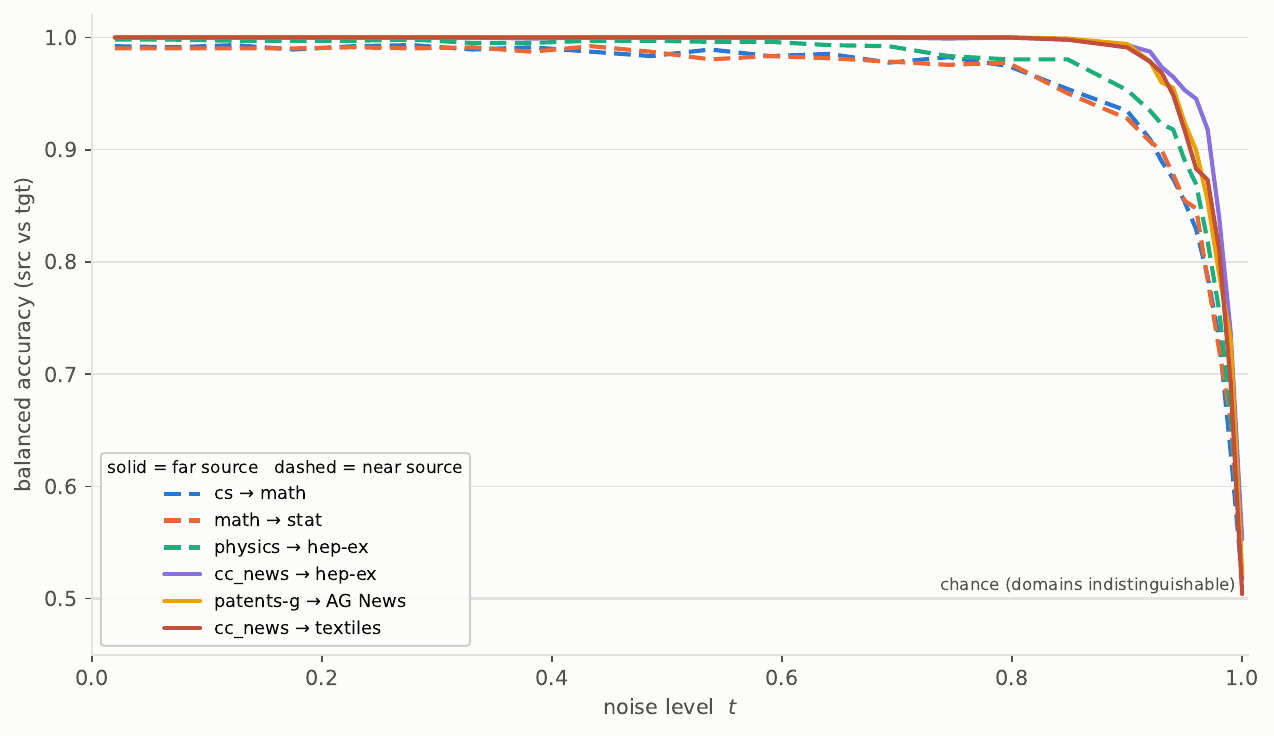}
    \caption{Uniform kernel.}
    \label{fig:balacc-uniform}
  \end{subfigure}
  \caption{\textbf{Domains stay separable almost to $t=1$.} Balanced accuracy of
  the domain classifier $c_t$ vs.\ noise level $t$, six source--target pairs of our text experiments; chance is $0.5$. Solid = far source, dashed = near. $c_t$ is a DDiT-style time-conditioned transformer ($256$ hidden, $4$ blocks, $4$ heads, attention-mean pooling) under the log-linear schedule, whose corrupted fraction is linear in $t$; $4000$ steps, batch $64$, lr $3\times10^{-4}$ with cosine decay and $10\%$ warmup, no label smoothing. The two panels differ only in the corruption kernel.}
  \label{fig:balacc}
\end{figure}

\subsection{Where the source helps, and what fine-tuning forgets}\label{app:forgettingandgt}
Table~\ref{tab:two-stage} shows that RefineMix beats fine-tuning on every pair. Here we look inside the models to see why, noise level by noise level, and answer two questions: at which levels does the source help the target, and what does fine-tuning lose that RefineMix keeps?

For the first question, we compare data mixing with fine-tuning through the loss gap $g(t)$ of Section~\ref{sec:method-diagnostic}, measured on held-out target text (Figure~\ref{fig:lossgap-twostage}). Where $g(t) < 0$, adding the source improves the target at level $t$, and where $g(t) > 0$, the price of mixing has taken over. On the two arXiv pairs, $g$ turns positive almost exactly at the chosen cutoff, and on cc\_news~$\to$~Textiles it stays near or below zero, matching its high cutoff. On the two remaining pairs, the loss gap is only a rough guide: it turns positive too early on cc\_news~$\to$~Hep-Ex, and on Patents-G~$\to$~AG News it is positive everywhere, although RefineMix at $0.2$ still beats fine-tuning.

For the second question, we measure how much of the source knowledge each model keeps, by comparing its per-level cross-entropy on held-out source text with that of the shared source checkpoint (Figure~\ref{fig:forgetting}). Fine-tuning ends above the checkpoint on every pair and at every noise level: the small target corpus overwrites what pretraining learned, most at low and middle noise, where the denoiser predicts tokens from a rich context. RefineMix, in contrast, stays at or slightly below the checkpoint below its cutoff, and moves away only above it, where the source is no longer admitted. Hence the two models differ exactly where the schedule says they should. Since only final checkpoints were kept, these curves show how much is lost, not when.

\begin{figure}[H]
    \centering
    \includegraphics[width=0.9\linewidth]{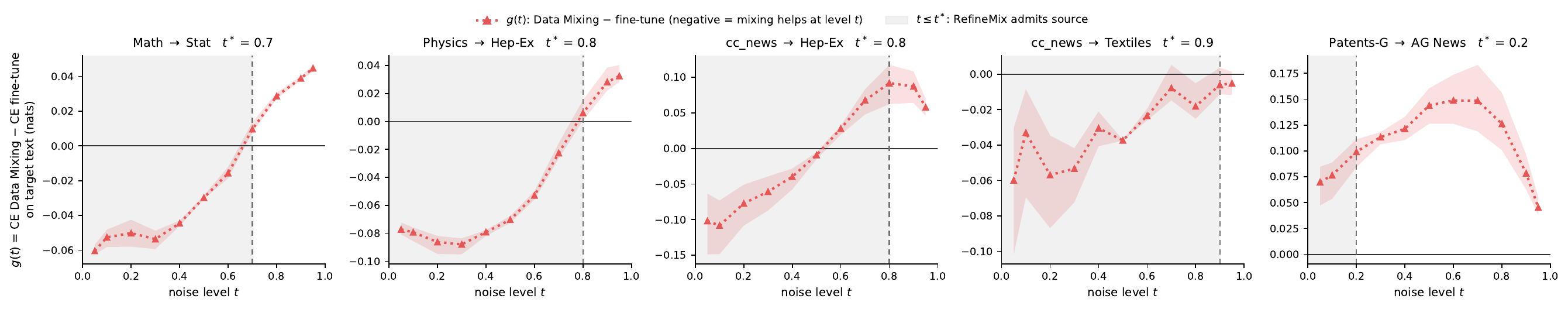}
    \caption{\textbf{Where the source helps the target.} For each pair of Table~\ref{tab:two-stage}, data mixing and fine-tuning start from the same source-pretrained checkpoint (masked kernel, mean over three seeds). We plot the loss gap $g(t)$, the per-level cross-entropy of data mixing minus that of fine-tuning, on held-out target text. Below zero, the source improves the target at that noise level; above zero, the price of mixing has taken over. On the arXiv pairs, the gap turns positive at the chosen cutoff; elsewhere, it is a rough guide.}
    \label{fig:lossgap-twostage}
\end{figure}

\begin{figure}[H]
    \centering
    \includegraphics[width=0.9\linewidth]{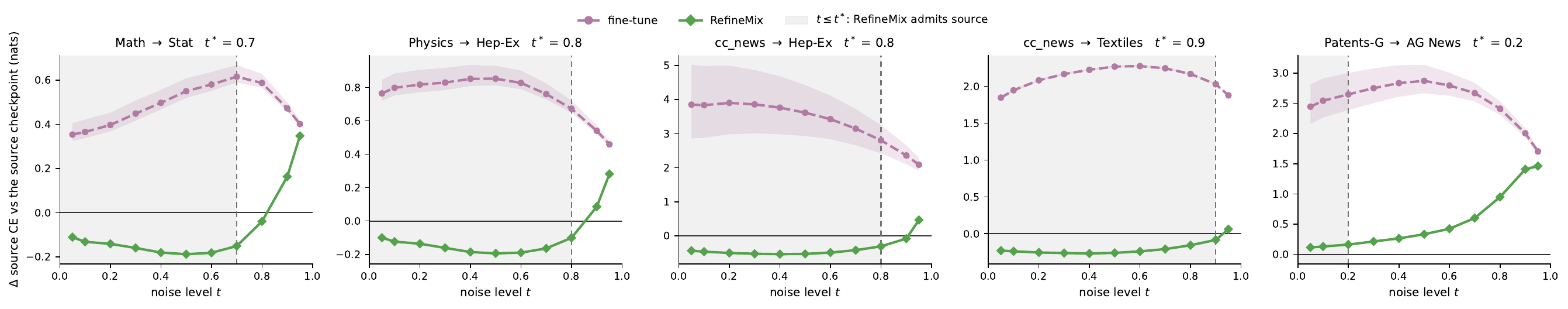}
    \caption{\textbf{RefineMix keeps what the source checkpoint knows.} For each pair of Table~\ref{tab:two-stage}, fine-tuning and RefineMix start from the same source-pretrained checkpoint (masked kernel, mean over three seeds). We plot the change in per-level cross-entropy on held-out source text between the checkpoint and the end of training; positive values mean that source knowledge was lost. The dashed line marks the cutoff of RefineMix, and the shaded region is where it admits the source. Fine-tuning loses source knowledge at every level, whereas RefineMix keeps it below the cutoff and moves away only above it.}
    \label{fig:forgetting}
\end{figure}

\section{Samples of the adapted pretrained models}\label{app:pretrained-samples}
Table~\ref{tab:paper} compares the models by their target perplexity. Since a model can fit target text and still write source text, we also sample from each adapted model and check what it writes. For this, we draw samples from pure noise over a sweep of sampling temperatures, and measure both whether they belong to the target domain (Figure~\ref{fig:pretrained-ontarget}) and how fluent and diverse they are (Figure~\ref{fig:pretrained-frontier}).

\begin{figure}[H]
    \centering
    \includegraphics[width=\linewidth]{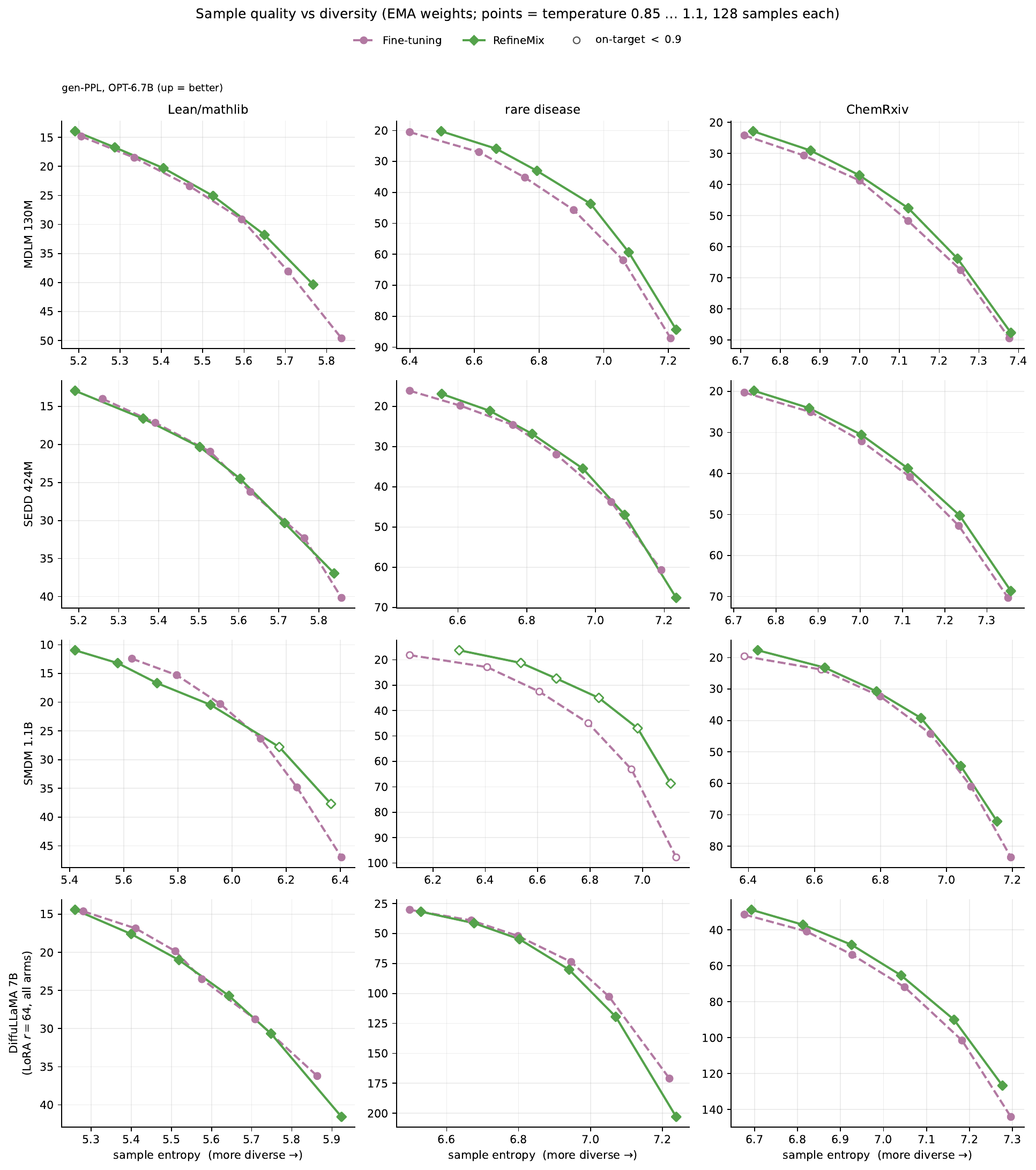}
    \caption{\textbf{Generation frontiers of the adapted models of Table~\ref{tab:paper}.} Each of the four pretrained models is adapted to each of the three targets, and we sample from pure noise over a sweep of sampling temperatures (mean over three seeds). We plot the generative perplexity of the samples, scored by OPT-2.7B, against their entropy, for fine-tuning and RefineMix. Perplexity is inverted, so up and to the right means more fluent and more diverse samples, and the dashed line marks real target text.}
    \label{fig:pretrained-frontier}
\end{figure}

\begin{figure}[H]
    \centering
    \includegraphics[width=\linewidth]{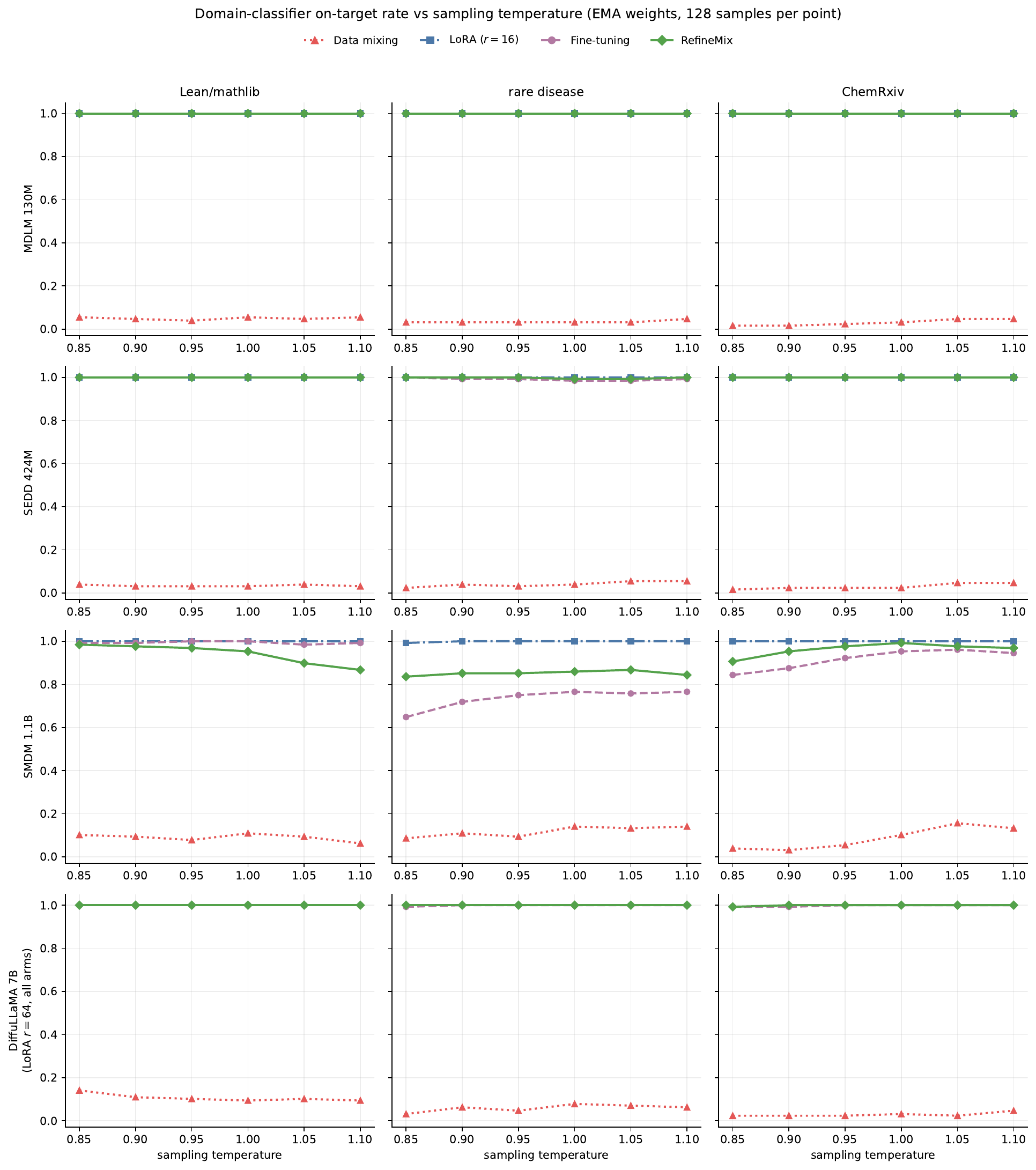}
    \caption{\textbf{The adapted models of Table~\ref{tab:paper} stay on target.} Each of the four pretrained models is adapted to each of the three targets, and we sample from pure noise over a sweep of sampling temperatures (mean over three seeds). We plot the on-target rate, the fraction of samples that a domain classifier assigns to the target rather than the source, against the sampling temperature, for all methods.}
    \label{fig:pretrained-ontarget}
\end{figure}

\clearpage
\section{Posteriors and losses for the two kernels}\label{app:kernels}
The marginals \eqref{equ_bg:forward} arise from a Markov chain that, between times $s < t$, keeps a token with probability $\alpha_{t\mid s} \coloneqq \alpha_t/\alpha_s$ and otherwise resamples it from $\boldsymbol\pi$. Let $\mathbf 1$ be the all-ones vector and $\langle\cdot,\cdot\rangle$ and $\odot$ the inner and Hadamard products. Both kernels share the posterior of the forward process, obtained from this chain by Bayes' rule,
\begin{align}
p(\mathbf z_s \mid \mathbf z_t, \mathbf z_0) = \mathrm{Cat}\!\left(\mathbf z_s;\;
\frac{\big[\alpha_{t\mid s}\mathbf z_t + (1-\alpha_{t\mid s})\langle \mathbf z_t,\boldsymbol\pi\rangle\mathbf 1\big]
\odot \big[\alpha_s \mathbf z_0 + (1-\alpha_s)\boldsymbol\pi\big]}
{\alpha_t \langle \mathbf z_t, \mathbf z_0\rangle + (1-\alpha_t)\langle \mathbf z_t, \boldsymbol\pi\rangle}\right),
\label{equ_bg:posterior}
\end{align}
and in each case the KL in \eqref{equ_bg:loss} simplifies.

\noindent\textbf{Masked (absorbing-state) diffusion.}
With $\boldsymbol\pi = \mathbf m$ the posterior is
\begin{align}
p(\mathbf z_s \mid \mathbf z_t, \mathbf z_0) =
\begin{cases}
\mathrm{Cat}\!\left(\mathbf z_s;\, \mathbf z_t\right), & \text{if } \mathbf z_t \neq \mathbf m,\\[6pt]
\mathrm{Cat}\!\left(\mathbf z_s;\, \dfrac{(1-\alpha_s)\,\mathbf m + (\alpha_s-\alpha_t)\,\mathbf z_0}{1-\alpha_t}\right), & \text{if } \mathbf z_t = \mathbf m,
\end{cases} \label{equ_bg:posterior-masked}
\end{align}
so an unmasked token is carried over unchanged and a masked one is
revealed with probability $(\alpha_s-\alpha_t)/(1-\alpha_t)$.
Constraining $\mathbf x_\theta$ to place no mass on \texttt{[MASK]} and
to copy unmasked tokens, the KL in \eqref{equ_bg:loss} becomes a
weighted cross-entropy: it vanishes at unmasked positions and equals
$\tfrac{\alpha_{s(i)}-\alpha_{t(i)}}{1-\alpha_{t(i)}}
\big(-\log\langle\mathbf x_\theta(\mathbf z_{t(i)}), \mathbf z_0\rangle\big)$
at masked ones~\citep{sahoo2024simpleeffectivemaskeddiffusion,shi2025simplifiedgeneralizedmaskeddiffusion}.
In continuous time this is
\begin{align}
\mathcal{L}_{\text{masked}}
= \mathbb{E}_{t \sim \mathcal U(0,1)}\,\mathbb{E}_{q}
\left[\frac{\alpha_t'}{1-\alpha_t}
\sum_{\ell:\, \mathbf z_t^{\ell} = \mathbf m}
\log\langle\mathbf x_\theta(\mathbf z_t)^{\ell}, \mathbf z_0^{\ell}\rangle\right],
\label{eq:app-loss-masked}
\end{align}
the objective used in all our experiments.

  \noindent\textbf{Uniform-state diffusion.}
With $\boldsymbol\pi = \mathbf 1/N$ no state is absorbing and the
posterior is~\citep{lou2024discrete,schiff2025simpleguidance,sahoo2025diffusionduality}
\begin{align}
p(\mathbf z_s \mid \mathbf z_t, \mathbf z_0) = \mathrm{Cat}\!\left(\mathbf z_s;\;
\frac{N\alpha_t\,\mathbf z_t \odot \mathbf z_0 + (\alpha_{t\mid s}-\alpha_t)\,\mathbf z_t + (\alpha_s-\alpha_t)\,\mathbf z_0 + (1-\alpha_{t\mid s})(1-\alpha_s)\,\frac{\mathbf 1}{N}}
{N\alpha_t\,\langle\mathbf z_t, \mathbf z_0\rangle + 1-\alpha_t}\right),
\label{equ_bg:posterior-uniform}
\end{align}
which places mass on the current token, on the clean token, and
uniformly on all others. Here the KL does not collapse to a
cross-entropy on a subset of positions, since every position carries
signal, and it is evaluated in full at all $L$ positions
\citep{schiff2025simpleguidance,sahoo2025diffusionduality}.

\section{Proofs}\label{app:proofs}
Both proofs concern the ideal model of Section~\ref{sec:method-setup}: every denoiser is an exact distribution over whole sequences, and $t_{\max}$ lies on the time grid $0 = t_0 < t_1 < \dots < t_K = 1$ of the sampler. As in Section~\ref{sec:background}, a reverse step with denoiser $D$ is
\begin{equation}\label{eq:app-step}
D(x_{t_{i-1}} \mid x_{t_i}) = \sum_{x_0} p(x_{t_{i-1}} \mid x_{t_i}, x_0)\, D(x_0 \mid x_{t_i}),
\end{equation}
which, since the forward process is Markov, is the exact reverse step of the forward chain started at $D$.

\subsection{Proof of Theorem~\ref{thm:safe}}\label{app:proof-safe}
\thmsafe*
\begin{proof}
\noindent\textbf{Classifier.} By Bayes' rule, with $m(x) = \lambda\, q(x) + (1-\lambda)\, p(x)$ at any level,
\begin{equation}\label{eq:app-c}
c(x) = \frac{\lambda\, q(x)}{m(x)}, \qquad 1 - c(x) = \frac{(1-\lambda)\, p(x)}{m(x)}.
\end{equation}

\noindent\textbf{Fact 1.} Since $q(x_t \mid x_0) = p(x_t \mid x_0)$,
\begin{align*}
m(x_0 \mid x_t)
&= \frac{p(x_t \mid x_0)\,\big[\lambda\, q(x_0) + (1-\lambda)\, p(x_0)\big]}{m(x_t)}
 = \frac{\lambda\, q(x_t)}{m(x_t)} \cdot \frac{p(x_t \mid x_0)\, q(x_0)}{q(x_t)} + \frac{(1-\lambda)\, p(x_t)}{m(x_t)} \cdot \frac{p(x_t \mid x_0)\, p(x_0)}{p(x_t)} \\
&= c(x_t)\, q(x_0 \mid x_t) + \big(1 - c(x_t)\big)\, p(x_0 \mid x_t).
\end{align*}

\noindent\textbf{Fact 2.} \emph{Output.} Since $p(x_0)\, q(x_0) = 0$, we have $p(A) = 0$ and $q(A) = 1$, so by Fact~1
\begin{equation}\label{eq:app-output}
\sum_{x_0 \in A} m(x_0 \mid x_t)
= c(x_t) \underbrace{\sum_{x_0 \in A} q(x_0 \mid x_t)}_{=\,1} + \big(1 - c(x_t)\big) \underbrace{\sum_{x_0 \in A} p(x_0 \mid x_t)}_{=\,0}
= c(x_t).
\end{equation}
Below the cutoff $t_k = t_{\max}$, the sampler composes the steps \eqref{eq:app-step} with $D = m$, the exact reverse steps of the chain started at $m$, so
\[
\sum_{x_{t_1}, \dots, x_{t_{k-1}}} \prod_{i=1}^{k} m(x_{t_{i-1}} \mid x_{t_i}) = m(x_0 \mid x_{t_k}).
\]
\emph{Input.} Since $\alpha_1 = 0$, $p(x_1 \mid x_0) = \prod_{\ell} \mathrm{Cat}(\mathbf z^\ell_1; \boldsymbol\pi)$ for every $x_0$, so the sampler starts at $x_1 \sim p(x_1)$. Above the cutoff, every step \eqref{eq:app-step} with $D = p$ preserves the target marginal,
\[
\sum_{x_{t_i}} p(x_{t_{i-1}} \mid x_{t_i})\, p(x_{t_i}) = p(x_{t_{i-1}}), \qquad i > k,
\]
so by induction $x_{t_{\max}} \sim p(x_{t_{\max}})$.

\noindent\textbf{Theorem.} Combining input, composition and output with \eqref{eq:app-c}, with $x$ at level $t_{\max}$,
\begin{align*}
\mathbb{P}(\hat x_0 \in A)
&= \sum_{x} p(x) \sum_{x_0 \in A} m(x_0 \mid x)
 = \sum_{x} p(x)\, c(x)
 = \sum_{x} p(x)\, \frac{\lambda\, q(x)}{m(x)}
 = \frac{\lambda}{1-\lambda} \sum_{x} q(x)\, \frac{(1-\lambda)\, p(x)}{m(x)} \\
&= \frac{\lambda}{1-\lambda} \sum_{x} q(x)\, \big(1 - c(x)\big)
 = \frac{\lambda}{1-\lambda}\, C(t_{\max})
 \;\le\; \frac{\lambda}{1-\lambda}\, C(t') \qquad \text{for every } t' \ge t_{\max},
\end{align*}
where the inequality is Lemma~\ref{lem:monotone-collision}. No step depends on $\boldsymbol\pi$, so the result holds for both kernels.
\end{proof}

\begin{lemma}\label{lem:monotone-collision}
The overlap $C(t)$ is nondecreasing in $t$.
\end{lemma}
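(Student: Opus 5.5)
The plan is to treat $C(t)$ as an $f$-divergence-type functional of the pair $(p_t,q_t)$ and to derive monotonicity from a data-processing argument. The forward process is a Markov chain shared by both domains. For $s<t$ there is therefore a sequence-level transition kernel $K_{t\mid s}(y\mid x)=\prod_\ell p(\mathbf z^\ell_t=\mathbf y^\ell\mid \mathbf z^\ell_s=\mathbf x^\ell)$, built from the per-token chain of Appendix~\ref{app:kernels} with $\alpha_{t\mid s}=\alpha_t/\alpha_s$. This kernel satisfies $p_t(y)=\sum_x K_{t\mid s}(y\mid x)\,p_s(x)$, and likewise $q_t(y)=\sum_x K_{t\mid s}(y\mid x)\,q_s(x)$, with the same kernel for both domains.

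Next I rewrite the overlap using \eqref{eq:app-c}:
\[
C(t)=\sum_y q_t(y)\,\frac{(1-\lambda)p_t(y)}{m_t(y)}=(1-\lambda)\sum_y f\big(p_t(y),q_t(y)\big),\qquad f(a,b)\coloneqq\frac{ab}{(1-\lambda)a+\lambda b}.
\]
I set $f(a,b)=0$ whenever $a=0$ or $b=0$, which matches $q(x)(1-c(x))=0$ off the common support. For $a,b>0$ we have $f(a,b)=\big(\lambda/a+(1-\lambda)/b\big)^{-1}$, a weighted harmonic mean. Weighted harmonic means are jointly concave on $[0,\infty)^2$ and positively $1$-homogeneous. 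This can be checked by the Hessian, or by writing $f$ as an infimum of linear functions: $f(a,b)=\min_{w\in[0,1]}\{w^2 a/\lambda+(1-w)^2 b/(1-\lambda)\}$, as follows from a Cauchy–Schwarz identity. With the zero convention, $f$ is also continuous on the boundary.

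A concave, $1$-homogeneous function is superadditive: $f(u+v)=2f(u/2+v/2)\ge f(u)+f(v)$. Hence
\[
f\Big(\textstyle\sum_x K(y\mid x)\,(p_s(x),q_s(x))\Big)\ \ge\ \sum_x K(y\mid x)\,f\big(p_s(x),q_s(x)\big).
\]
Summing over $y$ and using $\sum_y K(y\mid x)=1$ gives $C(t)\ge C(s)$. Since the argument uses only that the kernel is a shared stochastic matrix, it holds for both the masked and the uniform kernel.

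The main obstacle is the concavity and superadditivity step, in particular making the infimum representation rigorous and handling the boundary, where $p$ or $q$ vanishes, so that the convention $f=0$ is consistent with the definition of $c$ through $m>0$. Everything else is bookkeeping with the Markov property.
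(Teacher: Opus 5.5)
Your proof is correct and follows the paper's argument. Like the paper, you write $C(t)$ as a sum of a weighted harmonic mean of $(p_t,q_t)$, get superadditivity from concavity plus positive homogeneity, push both marginals through the shared forward kernel $K$, and sum out $y$ using $\sum_y K(y\mid x)=1$. Your representation $f(a,b)=\min_{w\in[0,1]}\{w^2a/\lambda+(1-w)^2b/(1-\lambda)\}$ checks out, and it gives concavity, superadditivity and the zero convention on the boundary in one stroke; the paper leaves these details implicit.
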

\begin{proof}
By \eqref{eq:app-c}, $C(t) = \sum_x h\big(q(x), p(x)\big)$ with $x$ at level $t$ and
\[
h(a, b) = \frac{(1-\lambda)\, a\, b}{\lambda a + (1-\lambda) b} = (1-\lambda) \Big(\frac{\lambda}{b} + \frac{1-\lambda}{a}\Big)^{-1},
\]
which is $(1-\lambda)$ times a weighted harmonic mean, hence concave and positively homogeneous, hence superadditive. For $t \le t'$, both marginals pass through the same forward kernel $K(y \mid x) = p(x_{t'} = y \mid x_t = x)$, so
\[
C(t') = \sum_{y} h\Big(\sum_{x} K(y \mid x)\, q(x),\; \sum_{x} K(y \mid x)\, p(x)\Big)
\;\ge\; \sum_{y} \sum_{x} K(y \mid x)\, h\big(q(x), p(x)\big) = C(t).
\]
\end{proof}

\subsection{Proof of Theorem~\ref{thm:gain}}\label{app:gain-proofs}
As in Section~\ref{sec:method-tradeoff}, each noise level is learned separately, a trained denoiser has the split form \eqref{eq:split} with a learned specific factor and a learned shared factor, and h.o.\ collects terms of order $o(1/N_T)$. Sums over $x_0$ with content $v$ run over $\{x_0 : v(x_0) = v\}$.

\begin{proof}
\noindent\textbf{Latent content.} Since $x_0$ determines $v$, $r(x_0 \mid v) = 0$ for $v \neq v(x_0)$, so $p(x_0) = p(v)\, r(x_0 \mid v)$ with $v = v(x_0)$, and the same for $q$. If $p(v)\, q(v) > 0$, every $x_0$ with $r(x_0 \mid v) > 0$ has $p(x_0)\, q(x_0) > 0$, which the distinct domains exclude. Hence $p(v)\, q(v) = 0$.

\noindent\textbf{The denoiser splits.} With $v = v(x_0)$ and $r(x_t \mid v) = \sum_{x_0} p(x_t \mid x_0)\, r(x_0 \mid v)$, Bayes' rule gives
\[
p(x_0 \mid x_t)
= \frac{p(x_t \mid x_0)\, p(v)\, r(x_0 \mid v)}{p(x_t)}
= \underbrace{\frac{p(v)\, r(x_t \mid v)}{p(x_t)}}_{p(v \mid x_t)}\;
  \underbrace{\frac{p(x_t \mid x_0)\, r(x_0 \mid v)}{r(x_t \mid v)}}_{r(x_0 \mid x_t, v)},
\]
and the same for $q$ and $m$, with $q(v)$ and $m(v) = \lambda\, q(v) + (1-\lambda)\, p(v)$; the second factor does not depend on the domain. Since $r(x_1 \mid v) = \prod_\ell \mathrm{Cat}(\mathbf z^\ell_1; \boldsymbol\pi)$ for every $v$, and $p(x_t \mid x_0) = \mathbb{1}\{x_t = x_0\}$ at $t = 0$, we have $p(v \mid x_1) = p(v)$ and $p(v \mid x_0) = \delta_{v(x_0)}$. Summing Fact~1 over the $x_0$ with content $v$, and using $q(v \mid x_t) = 0$ for every target content,
\begin{equation}\label{eq:app-content-blend}
m(v \mid x_t) = c(x_t)\, q(v \mid x_t) + \big(1 - c(x_t)\big)\, p(v \mid x_t) = \big(1 - c(x_t)\big)\, p(v \mid x_t), \qquad p(v) > 0.
\end{equation}

\noindent\textbf{What each model learns.} For a denoiser $D(x_0 \mid x_t) = S(v \mid x_t)\, R(x_0 \mid x_t, v)$ with $v = v(x_0)$, the chain rule of the KL divergence gives
\begin{align}
D_{\mathrm{KL}}\big(p(x_0 \mid x_t) \,\|\, D(x_0 \mid x_t)\big)
&= \sum_{x_0} p(x_0 \mid x_t) \log \frac{p(v \mid x_t)\, r(x_0 \mid x_t, v)}{S(v \mid x_t)\, R(x_0 \mid x_t, v)} \nonumber\\
&= \underbrace{D_{\mathrm{KL}}\big(p(v \mid x_t) \,\|\, S(v \mid x_t)\big)}_{\text{specific}}
 + \underbrace{\mathbb{E}_{v \sim p(v \mid x_t)}\, D_{\mathrm{KL}}\big(r(x_0 \mid x_t, v) \,\|\, R(x_0 \mid x_t, v)\big)}_{\text{shared}}.
\label{eq:app-chain}
\end{align}

\noindent\textbf{Effective sample size.} Below the cutoff, each target sample has weight $w_i = (1-\lambda)/N_T$ and each source sample $w_i = \lambda/N_S$, so $\sum_i w_i = 1$ and
\[
N_{\mathrm{eff}} = \frac{\big(\sum_i w_i\big)^2}{\sum_i w_i^2} = \Big[\frac{(1-\lambda)^2}{N_T} + \frac{\lambda^2}{N_S}\Big]^{-1},
\qquad
\lambda = \rho:\;\; \frac{N_T}{(N_T + N_S)^2} + \frac{N_S}{(N_T + N_S)^2} = \frac{1}{N_T + N_S}.
\]
Since $N_S \ge N_T$, $1/N_{\mathrm{eff}} \le \big((1-\lambda)^2 + \lambda^2\big)/N_T \le 1/N_T$, so $o(1/N_{\mathrm{eff}}) = o(1/N_T)$.
For a weighted maximum-likelihood fit $\hat\phi = \arg\max_\phi \sum_i w_i \log r_\phi(x_0^i \mid x_t^i, v^i)$ in which every sample carries the same Fisher information $I$ (Assumption~\ref{ass:rate}), the sandwich formula gives
\[
\mathrm{Cov}(\hat\phi) \approx \Big(\sum_i w_i I\Big)^{-1} \Big(\sum_i w_i^2 I\Big) \Big(\sum_i w_i I\Big)^{-1} = \frac{I^{-1}}{N_{\mathrm{eff}}},
\qquad
\mathbb{E}\, D_{\mathrm{KL}}\big(r_\phi \,\|\, r_{\hat\phi}\big) \approx \tfrac12 \operatorname{tr}\big(I\, \mathrm{Cov}(\hat\phi)\big) = \frac{\dim \phi}{2 N_{\mathrm{eff}}}.
\]

\noindent\textbf{Price.} The price needs no assumption on transfer. Since $L_p$ is the NELBO, $L_p(\theta) - L_p(p)$ is the KL divergence between the forward path of the target and the reverse path of $\theta$. The reverse paths of $p$ and $p_{\text{RefineMix}}$ agree above $t_{\max}$, and below it both fill in $x_{t_1}, \dots, x_{t_{k-1}}$ with the same forward bridge $p(\cdot \mid x_0, x_{t_{\max}})$, so only $x_0$ remains. On target sequences, Fact~1 gives $m(x_0 \mid x) = \big(1 - c(x)\big)\, p(x_0 \mid x)$, hence, with $x \sim p(x_{t_{\max}})$,
\begin{align*}
L_p(p_{\text{RefineMix}}) - L_p(p)
&= \mathbb{E}\, D_{\mathrm{KL}}\big(p(x_0 \mid x) \,\|\, m(x_0 \mid x)\big)
   && \text{(only $x_0$ remains)} \\
&= \mathbb{E}\big[-\log\big(1 - c(x)\big)\big]
   && \text{(Fact~1: $m(x_0 \mid x) = (1 - c(x))\, p(x_0 \mid x)$)} \\
&= \mathbb{E}\, D_{\mathrm{KL}}\big(p(v \mid x) \,\|\, m(v \mid x)\big)
   && \text{(by \eqref{eq:app-content-blend})} \\
&= B(t_{\max})
   && \text{(definition of $B$)} \\
&= -\log(1-\lambda) - D_{\mathrm{KL}}\big(p(x_{t_{\max}}) \,\|\, m(x_{t_{\max}})\big)
   && \text{(by \eqref{eq:app-c})} \\
&\ge \mathbb{E}\big[c(x)\big]
   && \text{($-\log(1-c) \ge c$)} \\
&= \mathbb{P}(\hat x_0 \in A)
   && \text{(Theorem~\ref{thm:safe})}.
\end{align*}
Hence the price is governed by the same classifier as Theorem~\ref{thm:safe}, and it bounds its cost. By the data-processing inequality, $D_{\mathrm{KL}}(p(x_t) \,\|\, m(x_t))$ is nonincreasing in $t$, so $B$ is nondecreasing, from $B(0) = 0$, since $m(x_0) = (1-\lambda)\, p(x_0)$ on target sequences, to $B(1) = -\log(1-\lambda)$, since $p(x_1) = m(x_1)$. Only the Markov forward process shared by both domains is used, so the price holds for both kernels.

\noindent\textbf{Gain.} Let $\theta^*_\tau$ be the population model at cutoff $\tau$, so $\theta^*_0 = p$ and $\theta^*_{t_{\max}} = p_{\text{RefineMix}}$. By \eqref{eq:app-chain}, the error of a trained denoiser splits into the error of its specific factor and of its shared factor. The specific factor is learned from the same $N_T$ target samples in both models, so its error $s(t)$ is the same for $\tau = 0$ and $\tau = t_{\max}$; by Assumption~\ref{ass:rate}, the shared factor has error $e(t)/N + o(1/N)$, with $N = N_{\mathrm{eff}}$ for $t \le \tau$ and $N = N_T$ otherwise. Hence
\[
\mathbb{E}\big[L_p(\hat\theta_\tau)\big]
= L_p(\theta^*_\tau) + \int_0^1 s(t)\, dt + \int_0^{\tau} \frac{e(t)}{N_{\mathrm{eff}}}\, dt + \int_{\tau}^{1} \frac{e(t)}{N_T}\, dt + o\Big(\frac{1}{N_T}\Big),
\]
and subtracting $\tau = t_{\max}$ from $\tau = 0$, with $L_p(\theta^*_{t_{\max}}) - L_p(\theta^*_0) = B(t_{\max})$ from the price,
\[
\Delta(t_{\max}) = \Big(\frac{1}{N_T} - \frac{1}{N_{\mathrm{eff}}}\Big) \int_0^{t_{\max}} e(t)\, dt - B(t_{\max}) + o\Big(\frac{1}{N_T}\Big).
\]

\noindent\textbf{Best cutoff.} Write $\kappa = 1/N_T - 1/N_{\mathrm{eff}} = \lambda(2-\lambda)/N_T - \lambda^2/N_S$, which decreases in $N_T$, and $F_\kappa(t) = \kappa \int_0^t e - B(t)$, which is continuous on $[0,1]$, so its largest maximizer $t^\star_\kappa$ exists. Let $\kappa_1 > \kappa_2$ and suppose $t_2 = t^\star_{\kappa_2} > t_1 = t^\star_{\kappa_1}$. Since $e \ge 0$,
\[
F_{\kappa_1}(t_2) - F_{\kappa_1}(t_1)
= \kappa_1 \!\int_{t_1}^{t_2}\! e - \big(B(t_2) - B(t_1)\big)
\;\ge\; \kappa_2 \!\int_{t_1}^{t_2}\! e - \big(B(t_2) - B(t_1)\big)
= F_{\kappa_2}(t_2) - F_{\kappa_2}(t_1) \;\ge\; 0,
\]
so $t_2$ also maximizes $F_{\kappa_1}$, which contradicts $t_1$ being the largest maximizer. Hence $t^\star$ is nondecreasing in $\kappa$, and so nonincreasing in $N_T$.  The argument uses no property of $B$, so this holds for any price that does not depend on $N_T$.

\end{proof}

\section{Testing Theorem~\ref{thm:safe} directly}\label{app:bound}
Theorem~\ref{thm:safe} rests on one identity, Fact~2 of Section~\ref{sec:method-purity}: on every noised sequence $x$, the mixed denoiser writes a source sequence with probability exactly $c(x)$. In this section, we test this identity on a trained model. For this, it is useful to take the model that is most likely to drift, a fully mixed model on CS~$\to$~Math with $\lambda = 0.9$ and $1\%$ of the target, and to hand it noised sequences whose content we control. For each $n$, we reveal $n$ target-specific tokens, mask the rest, and draw $256$ such noised sequences $x^{(j)}$ per seed, $768$ over the three seeds. We then compare two quantities: how often the model completes $x^{(j)}$ as a source sequence, and the source probability $c(x^{(j)})$ that the classifier assigns to it. By Fact~2, the two should agree.

\noindent\textbf{From the theorem to the plotted curves.} Neither quantity can be computed exactly, so we estimate both with a domain classifier. We train this classifier balanced, with prior $\mu = \tfrac12$, since a $9{:}1$ imbalance would leave the target class poorly resolved, which is exactly where the test must be sharp. Hence we shift its output to the prior $\lambda$ of the mixture before it estimates $c$:
\begin{align}
\sum_{x_0 \in A} m(x_0 \mid x)
&= c(x)
   && \text{(Fact~2)} \nonumber\\
c(x)
&= \frac{\lambda\, q(x)}{m(x)}
   && \text{(by \eqref{eq:resp})} \nonumber\\
\hat c^{\mu}(x)
&= \frac{\mu\, q(x)}{\mu\, q(x) + (1-\mu)\, p(x)}
   && \text{(classifier output, prior $\mu$)} \nonumber\\
\frac{q(x)}{p(x)}
&= \frac{1-\mu}{\mu} \cdot \frac{\hat c^{\mu}(x)}{1-\hat c^{\mu}(x)}
   && \text{(solve for the likelihood ratio)} \nonumber\\
\frac{c(x)}{1-c(x)}
&= \frac{\lambda}{1-\lambda} \cdot \frac{q(x)}{p(x)}
 = \frac{\lambda}{1-\lambda} \cdot \frac{1-\mu}{\mu} \cdot \frac{\hat c^{\mu}(x)}{1-\hat c^{\mu}(x)}
   && \text{(odds of \eqref{eq:resp})} \nonumber\\
c(x)
&= \frac{\lambda(1-\mu)\, \hat c^{\mu}(x)}
        {\lambda(1-\mu)\, \hat c^{\mu}(x) + (1-\lambda)\, \mu\, \big(1-\hat c^{\mu}(x)\big)}
   && \text{(prior shift, \citealp{saerens2002adjusting})}
   \label{eq:app-prior-shift}\\
\widehat{\mathrm{input}}
&= \frac{1}{768} \sum_{j=1}^{768} c\big(x^{(j)}\big)
   && \text{(classifier on the noised input)} \nonumber\\
\widehat{\mathrm{output}}
&= \frac{1}{768} \sum_{j=1}^{768} \hat c^{\mu}\big(\hat x_0^{(j)}\big)
   \;\approx\; \frac{1}{768} \sum_{j=1}^{768} \sum_{x_0 \in A} m\big(x_0 \mid x^{(j)}\big)
   && \text{(classifier on the completion)} \nonumber
\end{align}
Here $\hat x_0^{(j)} \sim m(\cdot \mid x^{(j)})$ is the completion of $x^{(j)}$, and the classifier's score on the clean completion serves as a soft indicator of $\hat x_0^{(j)} \in A$. Averaged over noised target sequences instead of a fixed $n$, the input side is exactly the probability of Theorem~\ref{thm:safe}, $\mathbb{E}_{x \sim p(x_t)}[c(x)] = \frac{\lambda}{1-\lambda}\, C(t)$.

\noindent\textbf{Why the input side is an upper bound.} Theorem~\ref{thm:safe} is an equality for the optimal classifier $c$, but we can only estimate $c$ with a trained classifier $\hat c$, which sees less. If $\hat c$ is calibrated, it overestimates the source rate on noised target sequences, $\mathbb{E}_{x \sim p(x_t)}[c(x)] \le \mathbb{E}_{x \sim p(x_t)}[\hat c(x)]$. This follows from Jensen's inequality: writing $p(x) = (1-c(x))\, m(x)/(1-\lambda)$, both sides become expectations under $m$ of the concave function $u(1-u)$, at $c$ and at $\hat c = \mathbb{E}_m[c \mid \hat c]$. Hence we expect the input curve to lie at or above the output curve, and to touch it where the classifier is certain.

\noindent\textbf{Reading the plot.} Under masking, both curves start near $\lambda$ when nothing is revealed and separate as soon as target tokens appear. The generated curve falls fastest: four revealed target tokens bring it close to zero, and from eight on the model no longer writes source text. The upper bound falls more slowly, because the trained classifier still hedges where the sampler has already committed, and the gap is widest between two and four tokens. Under uniform noise, both curves stay near $\lambda$ for the first token, and the generated curve needs twelve to sixteen tokens to reach zero, because scrambled tokens hide which positions carry evidence, so the upper bound is also looser. The only point at which the generated curve lies above the upper bound is masking with nothing revealed. There the optimal classifier equals $\lambda$ exactly, so the gap reflects the error of the trained classifier, not a violation of the theorem. Hence the generated curve stays below the upper bound, as Theorem~\ref{thm:safe} predicts, and the gap between them is the price of a classifier that sees less than the optimal one.

\begin{figure}[H]
    \centering
    \includegraphics[width=0.9\linewidth]{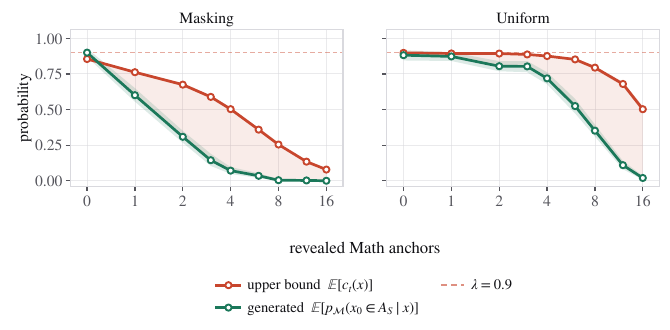}
    \caption{\textbf{Testing Theorem~\ref{thm:safe} directly} (CS $\to$ Math, three seeds). A fully mixed model ($\lambda = 0.9$, $1\%$ of the target) completes noised sequences in which $n$ target-specific tokens are revealed and the rest is masked. We plot two estimates of how often a completion is source text, against $n$: the classifier's source probability of the noised input, shifted to the prior $\lambda$ \eqref{eq:app-prior-shift}, and the classifier's score of the completion. By Theorem~\ref{thm:safe}, the two agree for the optimal classifier; a trained classifier makes the input side an upper bound. Bands are bootstrap intervals over $256$ samples per 3 seeds; the dashed line marks $\lambda = 0.9$.}
    \label{fig:purity-bound}
\end{figure}

\end{document}